\def\eqref#1{equation~\ref{#1}}
\def\1{\bm{1}}
\DeclareMathAlphabet{\mathsfit}{\encodingdefault}{\sfdefault}{m}{sl}
\SetMathAlphabet{\mathsfit}{bold}{\encodingdefault}{\sfdefault}{bx}{n}
\DeclareMathOperator*{\argmin}{arg\,min}
\definecolor{redlinkcolor}{rgb}{0.79607843, 0.25098039, 0.25882353}
\definecolor{bluecitecolor}{rgb}{0,0.36,0.69}
\definecolor{myblue}{HTML}{000099}
\definecolor{myred}{HTML}{E55451}
\definecolor{mycyan}{HTML}{00ccff}
\newlength\myindent
\title{Private Multi-Task Learning: Formulation and Applications to Federated Learning}
\author{\name Shengyuan Hu \email shengyuanhu@cmu.edu \\
      \addr Carnegie Mellon University\\
      \AND
      \name Zhiwei Steven Wu \email zstevenwu@cmu.edu \\
      \addr Carnegie Mellon University\\
      \AND
      \name Virginia Smith \email smithv@cmu.edu\\
      \addr Carnegie Mellon University\\}
\newtheorem{definition}{Definition}
\newtheorem{theorem}{Theorem}
\newtheorem{lemma}{Lemma}
\newtheorem{corollary}[theorem]{Corollary}
\begin{document}

\maketitle

\begin{abstract}

Many problems in machine learning rely on \textit{multi-task learning (MTL)}, in which the goal is to solve multiple related machine learning tasks simultaneously. MTL is particularly relevant for privacy-sensitive applications  in areas such as  healthcare, finance, and IoT computing, where sensitive data from multiple, varied sources are shared for the purpose of learning. 
%While multi-task learning can significantly improve accuracy when learning over these heterogeneous datasets, there is a dearth of work studying the privacy implications of learning in multi-task settings. 
In this work, we formalize notions of client-level privacy for MTL via \emph{billboard privacy} (BP), a relaxation of differential privacy for mechanism design and distributed optimization. We then propose an algorithm for mean-regularized MTL, an objective commonly used for applications in personalized federated learning, subject to BP.
%to solve the commonly-used mean-regularized MTL formulation privately. 
We analyze our objective and solver, providing certifiable guarantees on both privacy and utility.
%We provide certifiable guarantees our objective and solver in terms of privacy and utility.
Empirically, we find that our method provides improved privacy/utility trade-offs relative to global baselines across common federated learning benchmarks.  
\end{abstract}

\section{Introduction}

Multi-task learning (MTL) aims to solve multiple learning tasks simultaneously while exploiting similarities/differences across tasks~\citep{caruana:1998ml}. MTL is commonly used in applications that warrant strong privacy guarantees. For example, MTL has been used in {healthcare}, as a way to learn over diverse populations or between multiple institutions~\citep{baytas2016asynchronous,suresh2018learning,harutyunyan2019multitask}; in {financial forecasting}, to combine knowledge from multiple indicators or across organizations~\citep{ghosn1997multi,chengcacm}; and in {IoT computing}, as an approach for personalized federated learning~\citep{smith2017federated, hanzely2020federated,hanzely2020lower, ghosh2020efficient,sattler2020clustered,deng2020adaptive,mansour2020three}. While MTL can significantly improve accuracy when learning in these applications, there is a dearth of work studying the privacy implications of multi-task learning. 

\begin{wrapfigure}{r}{0.39\textwidth}
    \label{fig:dp_vs_jdp_empirical}
    \centering
    \vspace{-.2in}
    \hspace{-.2in}
    \includegraphics[trim={.1cm 0 0 0},clip,width=0.27\textwidth]{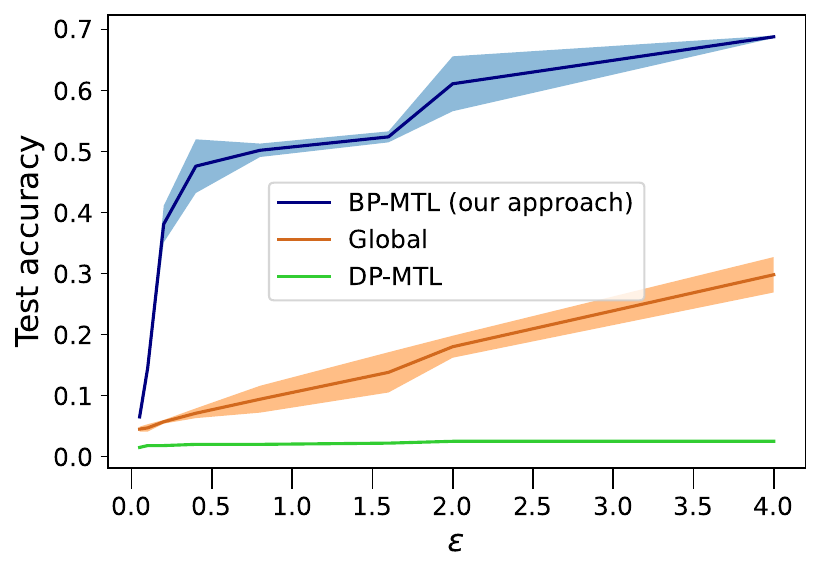}
    \vspace{-1em}
    \caption{\small Naively using  current client-level DP formulations  with MTL results in models that are no better than a random guess.}
    \label{fig:motivation}
    \vspace{-2em}
\end{wrapfigure}

In this work, we develop and theoretically analyze methods for MTL with formal privacy guarantees. Motivated by applications in federated learning, we aim to provide \textbf{client-level privacy},
%\footnote{In federated learning, where each MTL task often corresponds to a user's local training task, this can be equivalently viewed as `client-level' or `user-level' privacy~\citep{mcmahan2018learning}.}
where each task corresponds to a client/user/data silo, and the goal is to protect sensitive information in each task's data~\citep{mcmahan2018learning}. %(e.g., generated from their mobile phone or edge devices).
We focus on ensuring \emph{differential privacy} (DP) \citep{DworkMNS06}, which (informally) requires  an algorithm's output to be insensitive to changes in  any single entity's data. 

% \begin{wrapfigure}{r}{0.35\textwidth}
%     \label{fig:dp_vs_jdp_empirical}
%     \centering
%     \vspace{-.2in}
%     \hspace{-.2in}
%     \includegraphics[trim={.1cm 0 0 0},clip,width=0.27\textwidth]{figs/best_femnist_test_acc_by_test_with_dp_non_jdp.pdf}
%     \vspace{-1em}
%     \caption{\small Naively using  current client-level DP formulations  with MTL results in models that are no better than a random guess.}
%     \label{fig:motivation}
%     \vspace{-1em}
% \end{wrapfigure}
For MTL, where a separate model is generated for each client, using client-level DP directly would require the entire set of predictive models across all tasks to be insensitive to changes in the private data of any single task. This requirement is too stringent for most applications, as it implies that the  predictive model for task $k$ must have little dependence on the training data for task $k$, thus preventing the usefulness of the model (see Figures~\ref{fig:motivation}). 

% \begin{figure}
%     \centering
%     \includegraphics[scale=0.5]{}
%     \caption{Empirical comparison between DP-MTL and JDP-MTL on FEMNIST.}
%     \label{fig:empirical_dpm_vs_jdp}
% \end{figure}
To address this issue, we leverage a privacy model known as the \textit{billboard model}~\citep{Hsu0RW16}. The \textit{billboard model} is built using: (1) a global signal from a differentially private process that is public to all the clients, and (2) every client $i$'s private data. Unlike DP, the \textit{billboard model} ensures that for each task $k$, the set of output predictive models for all other tasks \textit{except} $k$ is insensitive to $k$'s private data.\footnote{This privacy guarantee is known as joint differential privacy (JDP) \cite{KearnsPRU14}, and billboard privacy is a common way to achieve JDP.} Therefore, it allows the predictive model for task $k$ to depend on $k$'s private data, helping to preserve each task's utility.

In this work, we develop new learning algorithms for MTL that satisfy the {billboard model} with rigorous privacy and utility guarantees. Specifically, we propose Private Mean-Regularized MTL, a simple framework for learning multiple tasks while ensuring client-level privacy. We show that our method achieves $(\epsilon,\delta)$-\textit{billboard privacy (BP)} (Defined in Section \ref{sec:formulation}). Our scalable solver builds on FedAvg~\citep{mcmahan2017communication}, a common method for communication-efficient federated optimization. We analyze the convergence of our solver on both nonconvex and convex objectives, demonstrating a tradeoff between privacy and utility, and evaluate this trade-off empirically on multiple federated learning benchmarks. We summarize our contributions below:
\vspace{-.2in}
\begin{itemize}[leftmargin=5mm, itemsep=0pt]
   % \item Our work is the first we are aware of to provide \textbf{formal definitions of client-level differential privacy for general multi-task learning objectives} (Section~\ref{sec:setup}). Our definitions rely on joint differential privacy and are applicable to commonly-used multi-task relationship learning objectives.
    
    % \item Our work is the first we are aware of to provide \textbf{formal definitions of client-level (joint) differential privacy for general multi-task learning objectives} (Section~\ref{sec:setup}), including multi-task relationship learning objectives commonly used in federated learning.
    
    \item We propose Private Mean-Regularized MTL, a simple MTL framework that provides \textbf{client-level billboard privacy (BP)} (Section~\ref{sec:pmtl}). We prove that our method achieves $(\epsilon,\delta)$-BP.
    
    \item We analyze the convergence of our communication-efficient solver on convex and nonconvex objectives. Our convergence analysis extends to non-private settings with partial participation, which may be of independent interest for problems in cross-device federated learning. %\shengyuan{Should we emphasize that the main contribution is to provide \textbf{private} MTL.} \gs{I think this seems clear to me from the current contributions listed}
    \item Finally, we explore the performance of our approach on common federated learning benchmarks (Section~\ref{sec:exps}). Our results show that we can retain the accuracy benefits of MTL in these settings relative to global baselines while still providing meaningful privacy guarantees. Further, even in cases where the MTL and global objectives achieve similar accuracy, we find that privacy/utility benefits exist when employing our private MTL formulation compared to privately learning a single global model.
\end{itemize}

\vspace{-.1in}
\section{Background and Related Work}
\vspace{-.05in}

\textbf{Multi-task learning.} Multi-task learning considers jointly solving multiple related ML tasks. Our work focuses on the general and widely-used formulation of {multi-task relationship learning}~\citep{Zhang:2010ac}, as detailed in~Section \ref{sec:setup}.
This form of MTL is particularly useful in privacy-sensitive applications where datasets are split among multiple heterogeneous entities~\citep[][]{baytas2016asynchronous, smith2017federated, ghosn1997multi}. In these cases, it is natural to view each  data source (e.g., financial institution, hospital, mobile phone) as a separate `task' that is learned in unison with the other tasks. This allows learning to be performed jointly, but the models to be personalized to each data silo. 
For example, in the setting of cross-device federated learning,  MTL is commonly used to train a personalized model for each device in a distributed network~\citep{smith2017federated,liu2017distributed}.

\textbf{Federated learning.} 
%Applications in federated learning have caused a recent resurgence of classical work in multi-task learning. 
A motivation for our work is the application of federated learning (FL), in which the goal is to collaboratively learn from a number of private data silos, such as remote devices or servers~\citep{mcmahan2017communication,kairouz2019advances,li2020federated}. To ensure client-level DP in FL, a common technique is to learn one \textit{global model} across the distributed data and then add noise to the aggregated model to sufficiently mask any specific client's update~\citep[e.g.,][]{kairouz2019advances,mcmahan2018learning,geyer2017differentially,levy2021learning,lowy2021private,lowy2022private}. However, a defining characteristic of federated learning is that the distributed data are likely to be heterogeneous, i.e., each client may generate data via a distinct data distribution~\citep{kairouz2019advances,li2020federated}. To model the (possibly) varying data distributions on each client, it is natural to instead consider learning a separate model for each client's local dataset. To this end, a number of recent works have explored multi-task learning as a way to improve the accuracy of learning in federated networks~\citep{smith2017federated, hanzely2020federated,hanzely2020lower, ghosh2020efficient,sattler2020clustered,deng2020adaptive,mansour2020three}. {Despite the prevalence of multi-task federated learning, we are unaware of any work that has explored client-level privacy for commonly-used multi-task relationship models (Section~\ref{sec:setup}) in federated settings.}
%\sh{Should we add a few previous work on private federated learning?}

\textbf{Differentially private MTL.}
%Although differentially private MTL has not seen significant attention, there are a few relevant works in this area. 
Prior work in private MTL differs from our own either in terms of the privacy formulation or MTL objective.
%Even though variants of MTL have been studied subject to standard DP, such existing work necessarily impose restrictions on the type of MTL settings they study.
For example, \citet{WuDPFedMTL} explore a specific MTL setting where a shared private feature representation  is first learned, followed by task-specific models. We instead study multi-task relationship learning (Section~\ref{sec:setup}), which is a general and widely-used MTL framework, particularly in federated learning~\citep{smith2017federated}. While our work focuses on client-level privacy, there has been work on data-level privacy for MTL, which aims to protect any single sample of local data rather than protecting the entire local dataset.
For example, \citet{xie2017privacy} propose a method for data-level privacy by modeling each task as a sum of a public shared weight and a task-specific weight that is only updated locally, and \citet{gupta2016differentially} study data-level privacy for a mean estimation MTL problem. \citet{li2019differentially} study multiple notions of DP for meta-learning. Although similarly motivated by personalization, their framework does not cover the multi-task setting, where there exists a separate model for each task. \citet{Hu2020Personalized} studied example-level private multi-task learning but only their method is restricted to small scale convex task, which is different from our focus on client-level privacy. More closely related to our work, \citet{jain2021dppersonalization} study a personalization method that learns a private shared representation. 
%, and their results are limited to linear regression tasks. 
% a personalized linear regression task on top of that. 
%They also primarily focus on labeled data that are jointly gaussian and squared loss. 
Although they similarly leverage the billboard model, their formulation cannot be applied to the general form of multi-task learning in this work. Their results are also limited to the special case of linear regression, unlike the broad set of convex and nonconvex objectives considered herein. Finally, \citet{pmlr-v162-bietti22a} similarly propose a personalized federated learning method using the billboard model. Although both methods train a global model shared across tasks, the concrete algorithms are rather different.
The main algorithm of \citet{pmlr-v162-bietti22a} builds on Federated Residual Learning~\citep{agarwal2020federated} while our main algorithm builds on mean-regularized multi-task learning. We provide an in-depth discussion and empirical comparison to this method in Appendix \ref{appen:ppsgd}.

\vspace{-.05in}
\section{Multi-Task Learning Setup and Privacy Formulation}
\label{sec:setup}
\vspace{-.05in}

In this section, we first formalize the multi-task learning objectives of interest (Section \ref{sec:mtl}), and then discuss our proposed privacy formulation (Section~\ref{sec:formulation}).
\vspace{-.05in}
\subsection{Problem Setup}
\label{sec:mtl}
\vspace{-.05in}
Multi-task learning aims to improve generalization by jointly solving and exploiting relationships between multiple tasks~\cite{caruana:1998ml,Ando:2005af}. The classical setting of multi-task relationship learning~\citep{zhang2017survey,Zhang:2010ac}  considers $m$ different tasks with their own task-specific data, learned jointly through the following objective: 
\begin{equation}
\label{eq:mtrl}
    \min_{W,\Omega} \left\{F(W,\Omega)=\left\{\frac{1}{m}\sum_{k=1}^m\sum_{i=1}^{n_k}l_k(x_i,w_k)+\mathcal{R}(W,\Omega)\right\}\right\}.
\end{equation}
Here $w_k$ is model for task $k$, $\{x_1, \dots, x_{n_k}\}$ is the local data for the $k^{th}$ task, $l_k(\cdot)$ is the empirical loss, $W=[w_1;\cdots;w_m]$, and $\Omega\in\mathbb{R}^{m\times m}$ characterizes the relationship between every pair of tasks. A common choice for setting the regularization term $\mathcal{R}(W,\Omega)$ in prior works~\citep{Zhang:2010ac,smith2017federated} is:

\vspace{-.2in}
\[
    \mathcal{R}(W,\Omega) = \lambda_1 \text{tr}(W\Omega W^T) \,,
\]
\vspace{-.25in}

where $\Omega$ can be viewed as a covariance matrix, used to learn/encode positive, negative, or unrelated task relationships. 
In this paper, we focus primarily on the mean-regularized multi-task learning objective~\citep{Evgeniou:2004rm}: a special case of (1) where $\Omega=(\bf{I}_{m\times m}-\frac{1}{m}\bf{1}_m\bf{1}_m^T)^2$ is fixed. Here $\bf{I}_{m\times m}$ is the identity matrix of size $m\times m$ and $\bf{1}_m\in\mathbb{R}^m$ is the vector with all entries equal to 1. 
By picking $\lambda_1=\frac{\lambda}{2}$, we can rewrite Objective~\ref{eq:mtrl} as:
\begin{equation}
\label{eq:global}
    \min_{W} \left\{F(W)=\left\{\frac{1}{m}\sum_{k=1}^m\left(\frac{\lambda}{2}\|w_k-\Bar{w}\|^2+\sum_{i=1}^{n_k}l_k(x_i,w_k)\right)\right\}\right\} \, ,
\end{equation}
where $\Bar{w}$ is the average of task-specific models, i.e., $\Bar{w}=\frac{1}{m}\sum_{i=1}^mw_k$. Note that $\Bar{w}$ is shared across all tasks, and each $w_k$ is kept locally for task learner $k$. During optimization, each task learner $k$ solves:
\vspace{-.1in}
\begin{equation}
\label{eq:local}
    \min_{w_k} \bigg\{f_k(w_k;\bar{w}) = \frac{\lambda}{2}\|w_k-\Bar{w}\|^2+\sum_{i=1}^{n_k}l_k(x_i,w_k) \bigg\}.
\end{equation}
\vspace{-.2in}

\textbf{Application to Federated Learning.} In federated learning, where the goal is to learn over a set of $m$ clients in a privacy-preserving manner, initial approaches focused on learning a single global model across the data~\cite{mcmahan2017communication}. However, as data distributions may differ from one client to another, MTL has become a popular alternative that enables every client to collaborate and learn a \textbf{separate, personalized} model of its own~\cite{smith2017federated}. Specifically, in the case of mean-regularized MTL, each client solves Equation \ref{eq:local} and utilizes $w_k$ as its final personalized model. Unlike finetuning from a global model, MTL itself learns a separate model for each client by solving Objective \ref{eq:mtrl} in order to improve the generalization performance~\citep{zhang2017survey,Zhang:2010ac}, which is not equivalent to simple finetuning from a global model (see Section~\ref{sec:exps}).  
% \gs{cite paper(s) and also point to experiments}
Despite the prevalence of mean-regularized multi-task learning and its recent use in applications such as federated learning with strong privacy motivations~\citep[e.g.,][]{smith2017federated,hanzely2020federated,hanzely2020lower,dinh2020personalized}, we are unaware of prior work that has formalized client-level differential privacy in the context of solving Objective~\ref{eq:global}. 

% \vspace{-0.1in}
\subsection{Privacy Formulation}
\label{sec:formulation}
To consider privacy for MTL, we start by introducing the definition of \textit{differential privacy (DP)} and then discuss its generalization to \textit{joint differential privacy (JDP)}. 
In the context of multi-task learning, each of the $m$ task learners owns a private dataset $D_i\in\mathcal{U}_i\subset\mathcal{U}$. We define $D=\{D_1,\cdots,D_m\}$ and $D'=\{D_1',\cdots,D_m'\}$, and call two sets $D,D'$ \textit{neighboring sets} if they only differ on the index $i$, i.e., $D_j=D_j'$ for all $j$ except $i$. %With this setup, we define differential privacy more formally below.

\vspace{.1in}
\begin{definition}[Differential Privacy (DP) for MTL \citep{DworkMNS06}]
A randomized algorithm $\mathcal{M}:\mathcal{U}^m\rightarrow\mathcal{R}^m$ is $(\epsilon,\delta)$-differentially private if for every pair of neighboring sets that only differ in arbitrary index $i$: $D,D'\in\mathcal{U}$ and for every set of subsets of outputs $S\subset\mathcal{R}$,
\begin{equation}
    \text{Pr}(\mathcal{M}(D)\in S)\leq e^{\epsilon}\text{Pr}(\mathcal{M}(D')\in S)+\delta.
\end{equation}
% \vspace{-0.20in}
\end{definition}
\vspace{-0.10in}
In the context of MTL, an algorithm outputs one model for every task. In this work we are interested in studying \textit{client-level differential privacy}, where the purpose is to protect one task's data from leakage to any other task~\cite{mcmahan2017communication}. As mentioned previously and illustrated in Figure~\ref{fig:mtl}, since the output of MTL is a \textit{collection of models}, traditional client-level DP would require that all the models produced by an MTL algorithm are insensitive to changes that happen in the private dataset of \textit{any} single client/task.

\textbf{Why can't we apply traditional client-level DP?}  With the above definition in mind, note that DP has a severe restriction: the model of any task learner must also be insensitive to changes in \textit{its own data}, effectively rendering each model useless. Although it is intuitive that this would result in unacceptable performance, we  verify it empirically in Figure~\ref{fig:motivation}. For a common federated learning benchmark (FEMNIST, discussed in Section~\ref{sec:exps}), we apply DPSGD on the joint model that concatenates all clients' parameters. We compare MTL with vanilla client-level DP relative to training a global model with client-level DP and our proposed MTL approach using BP (below). With the naive  DP formulation, MTL is significantly worse than the other approaches---improving only marginally upon random guessing.   

\textbf{Billboard Privacy.} To overcome this limitation of traditional DP, motivated by the billboard model \citep{Hsu0RW16}, we propose \textit{billboard privacy (BP)}, a relaxed notion of DP, to formalize the client-level privacy guarantees for MTL algorithm. We provide the formal definition below.
\begin{definition}[Billboard Privacy (BP) \citep{Hsu0RW16}]
Consider any set of functions: $f_i:\mathcal{U}_i\times \mathcal{R}\rightarrow\mathcal{R}'$ and $g: \mathcal{U}\rightarrow\mathcal{R}$, a randomized algorithm $\mathcal{M}:\mathcal{U}^m\rightarrow\mathcal{R}^m$ represented as $[f_i(\Pi_i D, g(D))]^m$ is $(\epsilon,\delta)$-billboard private if for every $i$, for every pair of neighboring datasets that only differ in index $i$: $D,D'\in\mathcal{U}^m$ and for every set of subsets of outputs $S\subset\mathcal{R}^m$,
\begin{equation}
    \text{Pr}(\mathcal{M}(D)_{-i}\in S)\leq e^{\epsilon}\text{Pr}(\mathcal{M}(D')_{-i}\in S)+\delta,
\end{equation}
where $\Pi_iD$ is $D$'s projection onto the $i$-th index and $\mathcal{M}(D)_{-i}$ represents the vector $\mathcal{M}(D)$ with the $i$-th entry removed.

\end{definition}

BP allows the predictive model for task $k$ to depend on the private data of $k$, while still providing a strong guarantee. BP provides $m-1$-out-of-$m$ privacy under Shamir's scheme of secret sharing \citep{shamir79sss}: even if all the other $m-1$ collude and share their information, they still will not be able to learn much about the private data in the task $k$. BP has mostly been used in applications related to mechanism design \citep{STOC14, KannanMRW15, Hsu0RW16}.  Although it is a natural choice for achieving client-level privacy in MTL, we are unaware of any work that studies the general MTL formulations considered herein subject to BP. We also note that we can naturally connect billboard privacy to standard differential privacy. Informally, if $g$ is $(\epsilon,\delta)$-differentially private, then $[f_i(\Pi_i D, g(D))]^m$ is $(\epsilon,\delta)$-billboard private for arbitrary $\{f_i\}_{i\in[1:m]}$ by definition of billboard privacy. In other words, if we take the output of a differentially private process and run some algorithm on top of that locally for each task learner \textit{without} communicating to the global learner or other task learners, this whole process can be shown to be BP.

\textbf{Remark (Generality of Privacy Formulation).} Finally, note that our privacy formulation itself is not limited to the multi-task relationship learning framework. For any form of multi-task learning where each task-specific model is obtained by training a combination of global component and local component(e.g. \cite{li2021ditto}), we can provide a BP guarantee for the MTL training process by using a differentially private global component.

\textbf{Connection to Joint Differential Privacy.} Compared to BP, a weaker yet more general privacy formulation is known as \textit{joint differential privacy (JDP)}~\citep{KearnsPRU14} defined formally below. By definition, $(\epsilon,\delta)$-BP implies $(\epsilon,\delta)$-JDP. Different from billboard privacy where a global differentially private message $g(D)$ is needed, JDP does not need any global information shared across all the clients. Compared to JDP, achieving BP is a harder problem since it requires learning a shared message (in the case of MTL, a private global model) that could be used for all $m$ clients while a JDP mechanism does not necessarily produce such message.
\begin{definition}[Joint Differential Privacy (JDP)~\citep{KearnsPRU14}]
A randomized algorithm $\mathcal{M}:\mathcal{U}^m\rightarrow\mathcal{R}^m$ is $(\epsilon,\delta)$-joint differentially private if for every $i$, for every pair of neighboring datasets that only differ in index $i$: $D,D'\in\mathcal{U}^m$ and for every set of subsets of outputs $S\subset\mathcal{R}^m$,
\begin{equation}
    \text{Pr}(\mathcal{M}(D)_{-i}\in S)\leq e^{\epsilon}\text{Pr}(\mathcal{M}(D')_{-i}\in S)+\delta,
\end{equation}
where $\mathcal{M}(D)_{-i}$ represents the vector $\mathcal{M}(D)$ with the $i$-th entry removed.

\end{definition}
\vspace{-0.05in}

% \begin{lemma}[Billboard Lemma]
%     \label{lemma:1}
%     Suppose $\mathcal{M}:\mathcal{D}\rightarrow\mathcal{W}$ is $(\epsilon,\delta)$-differentially private. Consider any set of functions: $f_i:\mathcal{D}_i\times\mathcal{W}\rightarrow\mathcal{W}'$. The composition $\{f_i(\Pi_i\mathcal{D},\mathcal{M}(\mathcal{D}))\}$ is $(\epsilon,\delta)$-joint differentially private, where $\Pi_i:\mathcal{D}\rightarrow\mathcal{D}_i$ is the projection of $\mathcal{D}$ onto $\mathcal{D}_i$.
% \end{lemma}

% \vspace{-.1in}
% With the \textit{Billboard Lemma}, we are able to obtain JDP by first training a differentially private model with data from all tasks, and then finetuning on each task with its local data. We formally introduce our algorithm and corresponding JDP guarantee by using Lemma~\ref{lemma:1} in Section \ref{sec:pmtl}.

% \textbf{Remark (Generality of Privacy Formulation).} Finally, note that our privacy formulation itself is not limited to the multi-task relationship learning framework. For any form of multi-task learning where each task-specific model is obtained by training a combination of global component and local component(e.g. \cite{li2021ditto}), we can provide a BP guarantee for the MTL training process by using a differentially private global component.

\vspace{-0.1in}

\section{PMTL: Private Multi-Task Learning}
\vspace{-0.2cm}
\label{sec:pmtl}

 We now present PMTL, a method for joint differentially-private MTL (Section \ref{sec:algorithm}). We provide both a formal privacy guarantee (Section \ref{sec:privacy}) and utility guarantee (Section \ref{sec:convergence}) for our approach.

\vspace{-0.1in}
\subsection{Algorithm}
\vspace{-0.2cm}
\label{sec:algorithm}

We summarize our solver for private multi-task learning in Algorithm \ref{alg:1}. Our method is based off of FedAvg~\citep{mcmahan2017communication}, a communication-efficient method widely used in federated learning. FedAvg alternates between two steps: (i) each task learner selected at one communication round solves its own local objective by running stochastic gradient descent for $E$ iterations and sending the updated model to the global learner; (ii) the global learner aggregates the local updates and broadcasts the aggregated mean. By performing local updating in this manner, FedAvg has been shown to empirically reduce the total number of communication rounds needed for convergence in federated settings relative to baselines such as mini-batch FedSGD~\citep{mcmahan2017communication}.
%running more local iterations, FedAvg has the advantage of reducing communication cost while retaining the utility performance. 
Our private MTL algorithm differs from FedAvg in that: (i) instead of learning a single global model, all task learners collaboratively learn separate, personalized models for each task; (ii) each task learner solves the local objective with the mean-regularization term; (iii) individual model updates are clipped and random Gaussian noise is added to the aggregated model updates to ensure client-level privacy. 
% \sh{Add a few sentences why FedAvg.}

In this work, we focus on providing global client-level billboard privacy. Therefore, we assume that we have access to a trusted global learner while aggregating updates from each task, i.e., 
% it is safe for a global learner to learn about task-specific data.
it is safe for some global entity to observe/collect the individual model updates from each task. This is a standard assumption in federated learning, where access to a trusted server is assumed in order to collect client updates~\citep{kairouz2019advances}. To add an additional layer on security, our method has a natural extension to support secure aggregation, a common cryptographic primitive used in federated learning~\citep{bonawitz2016practical,bonawitz2019federated,kairouz2021distributed}. While SA/MPC are important to ensure secrecy while communicating model updates, neither trusted server assumption nor secure aggregation protects a client's private data from leakage to other clients by observing the model output. Thus, our algorithm focuses on addressing privacy concern for model personalization in federated learning.

% %The major difference between our solver and a non-private solver for MTL is the noise added to the aggregation rule. 
% To aggregate updates from each task, we assume that we have access to a trusted global learner, i.e., 
% % it is safe for a global learner to learn about task-specific data.
% it is safe for some global entity to observe/collect the individual model updates from each task. This is a standard assumption in federated learning, where access to a trusted server is assumed in order to collect client updates~\citep{kairouz2019advances}. However, even with this assumption, note that it is possible for any single task learner to infer information about other tasks from the global model, since it is a linear combination of all task specific models and is shared among all clients. 
% % For example, in the scenario where there are only two task learners whose models are parameterized by $w_1$ and $w_2$ respectively, task learner one could simply retrieve $w_2$ by subtracting $\frac{1}{2}w_1$ from $\Bar{w}$ at each communication round. 

\setlength{\textfloatsep}{20pt}
\begin{algorithm}[t]
\setlength{\abovedisplayskip}{0pt}
\setlength{\belowdisplayskip}{0pt}
\setlength{\abovedisplayshortskip}{0pt}
\setlength{\belowdisplayshortskip}{0pt}
\caption{PMTL: Private Mean-Regularized MTL}
\label{alg:1}
\begin{algorithmic}[1]
    \STATE {\bf Input:}  $m$, $T$, $\lambda$, $\eta$, $\{w_1^0,\cdots,w_m^0\}$, $\widetilde{w}^0=\frac{1}{m}\sum_{k=1}^mw_k^0$
    \FOR  {$t=0, \cdots, T-1$}
        \STATE Global Learner randomly selects a set of tasks $S_t$ and broadcasts the mean weight $\widetilde{w}^t$
        \FOR  {$k\in S_t$ in parallel}
            % \STATE Each task updates its weight $w_k$ for some $E$ iterations
            \STATE Each client updates its weight $w_k$ for $E$ iterations, $o_k$ is the last iteration task $k$ is selected
            \vspace{.05in}
                \[
                    % w_k^t-\eta_t(\nabla_{w_k^t}l_k(w_k^t)+\lambda(w_k^t-\widetilde{w}^t))
                \]
                            \vspace{-.05in}
            \STATE Each client sends $g_k^{t+1}=w_k^{t+1}-w_k^t$ back to the global learner.
        \ENDFOR
        \STATE Global Learner computes a noisy aggregator of the weights
            \begin{tikzpicture}[remember picture, overlay]
            \draw[line width=0pt, draw=cyan!50, rounded corners=2pt, fill=cyan!50, fill opacity=0.3]
                ([xshift=-55pt,yshift=3pt]$(pic cs:a) + (223-147pt,-10pt)$) rectangle ([xshift=-55pt,yshift=3pt]$(pic cs:b)+(145-147pt,-43pt)$);
            \end{tikzpicture}
            \begin{tikzpicture}[remember picture, overlay]
            \draw[line width=0pt, draw=red!40, rounded corners=2pt, fill=red!40, fill opacity=0.3]
                ([xshift=-57pt,yshift=3pt]$(pic cs:a) + (143+3pt,-46+30pt)$) rectangle ([xshift=-57pt,yshift=3pt]$(pic cs:b)+(75+3pt,-63+30pt)$);
            \end{tikzpicture}
        % \texttt{/*~~Solve} $h_k(v_k; w^t)$\texttt{~~*/}\;
            
            \begin{align*}
                \widetilde{w}^{t+1}=\widetilde{w}^{t} +\frac{1}{|S_t|}\sum_{k\in S_t} g_k^{t+1}\min\bigg(1,\frac{\gamma}{\|g_k^{t+1}\|_2}\bigg)+\mathcal{N}(0,\sigma^2\bf{I}_{d\times d})
            \end{align*}
            
    \ENDFOR
    \STATE \textbf{Output} $w_1,\cdots, w_m$ as differentially private personalized models
    \vspace{-.05in}
    \\\hrulefill
    \vspace{.1em}
    \STATE\texttt{ClientUpdate}(w)
    \begin{ALC@g}
     \FOR {$j = 0, \cdots, E-1$}
        \STATE Task learner performs SGD locally
       \[ w = w-\eta(\nabla_{w}l_k(w)+\lambda(w-\widetilde{w}^t))\]
    \ENDFOR
    \end{ALC@g}
    % \STATE\texttt{[Optional] }
    % \FOR  {$k=1, \cdots, m$ in parallel}
    %     \STATE Each task assigns $w_k=w_k^T$ and runs local finetuning for different objectives (see Section~\ref{sec:exps}).
                
    %             % \[
    %             %     w_k = w_k-\eta_l(\nabla_{w_k}l_k(w_k)+\lambda(w_k-\widetilde{w}^T))
    %             % \]
    % \ENDFOR
    % \vspace{-0.18in}
\end{algorithmic}
% \vspace{-0.1in}
\end{algorithm}
% \vspace{-0.1in}

There are several ways to overcome this privacy risk and thus achieve $(\epsilon,\delta)$-differential privacy. In this paper, we use the Gaussian Mechanism~\citep{DworkR14} during global aggregation as a simple yet effective method, highlighted in  \textcolor{myred}{\textbf{red}} in line 8 of Algorithm \ref{alg:1}. In this case, each client receives a noisy aggregated global model, making it difficult for any task to leak  private information to the others. To apply the Gaussian mechanism, we need to bound the $\ell_2$-sensitivity of each local model update that is communicated to lie in $\mathcal{B} = \{\Delta w|\|\Delta w\|_2\leq \gamma\}$, as highlighted in  \textcolor{mycyan}{\textbf{blue}} in line 8 of Algorithm \ref{alg:1}. Hence, at each communication round, the global learner receives the model updates from each clients, and clips the model updates to $\mathcal{B}$ before aggregation. Note that different from DPSGD~\citep{abadi2016deep}, when we solve the local objective for each selected task at each communication round, our algorithm doesn’t clip and perturb the gradient used to update the task-specific model. Instead, since the purpose is to protect task or client-level privacy in multi-task learning, we perform standard SGD locally for each task and only clip and perturb the model update that is sent to the global learner. We formalize the privacy guarantee of Algorithm \ref{alg:1} in Section \ref{sec:privacy}. %\shengyuan{Maybe we could emphasize a bit more here we are doing DPMTL rather than DPFedAvg?} 

%\vspace*{1cm}
\subsection{Privacy Analysis}
\label{sec:privacy}
\vspace{-.2cm}
We now rigorously explore the privacy guarantee provided by Algorithm \ref{alg:1}. In our optimization scheme, for each task $k$, at the end of each communication round, a shared global model is received. After that the task specific model is updated by optimizing the local objective. We formalize this local task learning process as $h_k:\mathcal{D}_k\times\mathcal{W}\rightarrow\mathcal{W}$. Here we simply assume $\mathcal{W}\subset\mathbb{R}^d$ is closed. Define the mechanism for communication round $t$ to be
% \begin{equation}
%     \mathcal{M}^{1:t}(\{D_i\},\widetilde{w}^t,\{h_i(\cdot)\},\sigma)=\frac{1}{m}\sum_{i=1}^mh_i(D_i,\widetilde{w}^t)+\alpha^t.
% \end{equation}
\begin{equation}
    \mathcal{M}^{t}(\{D_i\},\{h_i(\cdot)\},\widetilde{w}^t,\sigma)=\widetilde{w}^t+\frac{1}{|S_t|}\sum_{k\in S_t}h_k(D_k,\widetilde{w}^t)+\beta^t,
\end{equation}
where $\beta^t\sim\mathcal{N}(0,\sigma^2\text{I}_{d\times d})$. Note that $\mathcal{M}^t$ characterizes a Sampled Gaussian Mechanism given $\widetilde{w}^t$ as a fixed model rather than the output of a composition of $\mathcal{M}^{j}$ for $j< t$. To analyze the privacy guarantee of Algorithm \ref{alg:1} over $T$ communication rounds, we define the composition of $\mathcal{M}^1$ to $\mathcal{M}^T$ recursively as $\mathcal{M}^{1:T}=\mathcal{M}^T(\{D_i\},\{h_i(\cdot)\},\mathcal{M}^{T-1},\sigma)$. 

\begin{theorem}
\label{th:1}
    Assume $|S_t|=q$ for all $t$ and the total number of communication rounds is $T$. There exists constants $c_1,c_2$ such that for any $\epsilon<c_1\frac{q^2}{m^2}T$, the mechanism $\mathcal{M}^{1:T}$ is $(\epsilon,\delta)$ client-level differentially private for any $\delta>0$ if we choose $\sigma\geq c_2\frac{\gamma\sqrt{T\log(1/\delta)}}{\epsilon m}$. When $q=m$, $\mathcal{M}^{1:T}$ is $(\epsilon,\delta)$ client-level differentially private if we choose $\sigma= \frac{4\gamma\sqrt{T\log(1/\delta)}}{\epsilon m}$.
\end{theorem}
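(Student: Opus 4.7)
My plan is to reduce each communication round of Algorithm~\ref{alg:1} to a Sampled Gaussian Mechanism and invoke the moments accountant of \citet{abadi2016deep} to compose privacy across the $T$ rounds. The key observation is that, conditioned on the previous global iterate $\widetilde{w}^t$, round $t$ performs exactly the following: subsample $q$ of $m$ tasks, collect their clipped local updates (each of $\ell_2$ norm at most $\gamma$), sum them, and add isotropic Gaussian noise. Any downstream local computation at each task is post-processing of $\widetilde{w}^{t+1}$ together with that task's private data and need not be analyzed for DP purposes here (the Billboard Lemma will carry the guarantee over to JDP in a separate step).

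I would first compute the per-round $\ell_2$-sensitivity. For neighboring inputs $D,D'$ differing only in task $i$, the change in $\sum_{k\in S_t}\mathrm{clip}(g_k,\gamma)$ is zero when $i\notin S_t$ and bounded by $\gamma$ (or $2\gamma$, which can be absorbed into $c_2$) when $i\in S_t$. Because clipping is applied directly to $g_k^{t+1}=w_k^{t+1}-w_k^{t}$, this bound holds regardless of how stale $w_k^t$ is under partial participation. I would then rewrite the update so that the Gaussian noise sits on the sum rather than on the average,
\[
\widetilde{w}^{t+1}-\widetilde{w}^{t}=\frac{1}{q}\Bigl(\sum_{k\in S_t}\mathrm{clip}(g_k,\gamma)+\mathcal{N}(0,q^2\sigma^2\mathbf{I})\Bigr),
\]
so that the effective noise multiplier is $q\sigma/\gamma$; the outer division by $q$ is post-processing and free.

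Plugging subsampling rate $\alpha=q/m$, noise multiplier $q\sigma/\gamma$, and $T$ rounds into the moments accountant bound (Theorem~1 of \citealp{abadi2016deep}) yields universal constants $c_1,c_2$ such that $\mathcal{M}^{1:T}$ is $(\epsilon,\delta)$-DP for every $\epsilon<c_1 q^2 T/m^2$ whenever $q\sigma/\gamma\geq c_2(q/m)\sqrt{T\log(1/\delta)}/\epsilon$, which rearranges to the claimed $\sigma\geq c_2\gamma\sqrt{T\log(1/\delta)}/(\epsilon m)$. For the sharper full-participation bound $q=m$, there is no subsampling amplification to track: each round is an ordinary Gaussian mechanism on a sum of sensitivity $\gamma$. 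I would bound the $T$-fold composition via Rényi differential privacy, noting that each round is $(\alpha,\alpha\gamma^2/(2m^2\sigma^2))$-RDP, so the composition is $(\alpha,\alpha T\gamma^2/(2m^2\sigma^2))$-RDP; converting to $(\epsilon,\delta)$-DP and optimizing over $\alpha$ justifies the explicit choice $\sigma=4\gamma\sqrt{T\log(1/\delta)}/(\epsilon m)$.

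The main obstacle is essentially bookkeeping: carrying the $q$ and $m$ factors correctly through the averaging-versus-summing conversion, confirming that stale local weights do not inflate the clipping-based sensitivity, and tracking the exact subsampling amplification constants so that the theorem matches Abadi et al.'s formulation verbatim. Once those pieces are in place, the first claim is a direct application of \citet{abadi2016deep} and the second claim follows from a textbook RDP composition of Gaussian mechanisms.
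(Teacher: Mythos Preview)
Your proposal is correct and follows essentially the same route as the paper: rewrite the aggregation so the Gaussian noise sits on the clipped sum with effective noise multiplier $q\sigma/\gamma$, invoke the moments accountant of \citet{abadi2016deep} for the subsampled case, and for $q=m$ handle the composition directly via R\'enyi DP of the Gaussian mechanism with sensitivity $\gamma/m$, then convert to $(\epsilon,\delta)$-DP by optimizing over the R\'enyi order. The only cosmetic differences are that the paper absorbs the $2\gamma$-versus-$\gamma$ replace/remove ambiguity silently and states its Gaussian RDP lemma with a slightly different constant, but your bookkeeping and the paper's lead to the same bounds up to universal constants.
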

 Theorem \ref{th:1} provides a provable privacy guarantee on the learned model. When all tasks participate in every communication round, i.e. $q=m$, the global aggregation step in Algorithm \ref{alg:1} reduce to applying Gaussian Mechanism without sampling rather than Sampled Gaussian Mechanism on the average model updates. 
We provide a detailed proof of Theorem \ref{th:1} in Appendix \ref{appen:priv}. Note in particular that Theorem \ref{th:1} doesn't rely on how task learners optimize their local objective. Hence, Theorem \ref{th:1} is not limited to Algorithm \ref{alg:1} and could be generalized to other local objectives and other global aggregation methods that produce a single model aggregate. 

Now we show that Algorithm \ref{alg:1}, which outputs $m$ separate models, satisfies BP. Given $\widetilde{w}^t$ for any $t\leq T$, we formally define the process that each task learner $k$ optimize its local objective to be $h_k':\mathcal{D}_k\times\mathcal{W}\rightarrow\mathcal{W}$. Note that $h_k'$ is not restricted to be $h_k$ and could represent the optimization process for any local objective. 
% In order to show that Algorithm \ref{alg:1} satisfies JDP, we would apply the \textit{Billboard Lemma} introduced in Section 3. 
In our MR-MTL problem, the average model broadcast by the global learner at every communication round is the output of a differentially private learning process. Task learners then individually train their models on the respective private data to obtain personalized models. By definition of billboard privacy in Section \ref{sec:formulation}, we are able to show that Algorithm \ref{alg:1} satisfies BP:
%\shengyuan{I'm trying to write it in a way such that JDP guarantees can be obtained at any timestamp during the training rather than only the last step. The previous version where we only focus on $t=T$ might mislead people and think we are only applying JDP to finetuning on a global model, which then would be similar to postprocessing theorem. Yet, I still find this part not clear/strong enough in demonstrating our focus.}
% We now present our main theorem of the JDP guarantee provided by Algorithm \ref{alg:1}:
\begin{corollary}
    \label{th:2}
    There exists constants $c_1,c_2$, for any $0<\epsilon<c_1\frac{q^2}{m^2}T$ and $\delta>0$, let $\sigma\geq\frac{c_2\gamma\sqrt{T\log(1/\delta)}}{\epsilon m}$. Algorithm \ref{alg:1} that outputs $h_k'(D_k,\mathcal{M}^{1:T})$ for each task is
    $(\epsilon,\delta)$-billboard private.
\end{corollary}
From Theorem \ref{th:2}, for any fixed $\delta$, the more tasks involved in the learning process, the smaller $\sigma$ we need in order to keep the privacy parameter $\epsilon$ the same. In other words, less noise is required to keep the task-specific data private.
When we have infinitely many tasks ($m\rightarrow\infty$), we have $\sigma\rightarrow0$, in which case only a negligible amount of noise is needed for the model aggregates to make the global model private to all tasks. We provide a detailed proof in Appendix \ref{appen:priv}.

\textbf{Remark (Generality of Corollary \ref{th:2}).} 
Note that the privacy guarantee provided by Corollary \ref{th:2} is not limited to mean-regularized MTL. For any form of multi-task relationship learning with fixed relationship matrix $\Omega$, as long as we fix the $\ell_2$-sensitivity of model updates and the noise scale of the Gaussian mechanism applied to the statistics broadcast to all task learners, the privacy guarantee induced by this aggregation step is fixed, regardless of the local objective being optimized. For example, as a natural extension of mean-regularized MTL, consider the case where task learners are partitioned into fixed clusters and optimize the mean-regularized MTL objective within each cluster, as in~\cite{evgeniou2005learning}. In this scenario, Theorem \ref{th:2} directly applies to the algorithm run on each cluster.

\vspace{-0.2cm}
\subsection{Convergence Analysis}
\vspace{-0.2cm}

\label{sec:convergence}
As discussed in Section \ref{sec:setup}, we are interested in the following task-specific objective:
\vspace{-.1in}
\begin{equation}
    % \vspace{-0.2in}
    f_k(w_k;\widetilde{w}) = l_k(w_k)+\frac{\lambda}{2}\|w_k-\widetilde{w}\|_2^2 \,
    % \vspace{-0.1in}
\end{equation}
where $\widetilde w$ is an estimate for the average model $\overline w$; $l_k(w_k)=\sum_{i=1}^{n_k}l_k(x_i,w_k)$ is the empirical loss for task $k$; $w_k\in\mathbb{R}^d$.

Here, we analyze the convergence behavior in the setting where a set $S_t$ of $q$ tasks participate in the optimization process at every communication round. Further, we assume the total number of communication round $T$ is divisible by the number of local optimization steps $E$: $T=0\mod E$. We present the following convergence result:

\begin{theorem}[Convergence under nonconvex loss (Informal)]
    \label{th:3}
    Let $f_k$ be $(L+\lambda)$-smooth. Assume $f_k$ is $G$-Lipschitz in $\ell_2$ norm such that $\gamma\geq G$.
    % Assume $\gamma$ is sufficiently large such that $\gamma\geq\max_{k,t}\|\nabla_{w_k^t}f_k(w_k^t;\widetilde{w}^t)\|_2$ \shengyuan{remove dependency of t, assume gradient bounded everywhere}. 
    Further let $f_k^*=\min_{w,\bar{w}}f_k(w;\bar{w})$, $p=\frac{q}{m}$, and $B = \max_t\max_kf_k(w_k^t;\widetilde{w}^t)$.
    % Further, assume the expected distance between the task specific model and the averaged model $\mathbb{E}[\|w_k^t-\bar{w}^t\|^2]$ is bounded by $M^2$ for all $k$. 
    If we use a fixed learning rate $\eta_t=\eta=\frac{1}{pL+\left(p+\frac{1}{p}\right)\lambda}$, Algorithm \ref{alg:1} satisfies:
    \begin{equation}
    \vspace{-0.1in}
    \label{eq:th3result1}
    \begin{aligned}
        \frac{1}{mT}\sum_{t=0}^{T-1}\sum_{k=1}^m\|\nabla f_k(w_k^{t};\widetilde{w}^t)\|^2\leq \mathcal{O}\left(\frac{\lambda}{T}\right)+\mathcal{O}\left(\frac{\lambda B}{E}\right)
        +\mathcal{O}\left(\frac{d\lambda\sigma^2}{E}\right).
    \end{aligned}
    \end{equation}
    Let $\sigma$ chosen as we set in Theorem \ref{th:2}. Take $T=\mathcal{O}\left(\frac{m}{\lambda d\gamma^2}\right)$, the right hand side is bounded by 
    \begin{equation}
        \label{eq:th3result2}
        \begin{aligned}
            \frac{1}{mT}\sum_{t=0}^{T-1}\sum_{k=1}^m\|\nabla f_k(w_k^{t};\widetilde{w}^t)\|^2\leq \mathcal{O}\left(\frac{d\gamma^2}{m}\right)+\mathcal{O}\left(\frac{\lambda B}{E}\right)+\mathcal{O}\left(\frac{1}{mE}\right)\frac{\log(1/\delta)}{\epsilon^2}.
        \end{aligned}
        % \vspace{-0.1in}
    \end{equation}
\end{theorem}

We provide formal statement and proof of Theorem~\ref{th:3} in Appendix~\ref{appen:nonconv}. The upper bound in Equation    \ref{eq:th3result1} consists of two parts: error induced by the gradient descent algorithm and error induced by the Gaussian Mechanism. When $\sigma=0$, Algorithm \ref{alg:1} recovers a non-private MR-MTL solver.
\begin{corollary}
When $\sigma=0$, Algorithm \ref{alg:1} with $(L+\lambda)$-smooth and nonconvex $f_k$ satisfies
\label{cor:2}
\begin{equation}
    \begin{aligned}
        \frac{1}{mT}\sum_{t=0}^{T-1}\sum_{k=1}^m\|\nabla f_k(w_k^{t};\widetilde{w}^t)\|^2\leq\mathcal{O}\left(\frac{\lambda}{T}\right)+\mathcal{O}\left(\frac{\lambda B}{E}\right).
    \end{aligned}
    % \vspace{-0.2cm}
\end{equation}
\end{corollary}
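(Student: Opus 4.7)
The corollary is an immediate specialization of Theorem~\ref{th:3} to the noiseless regime, so my plan is to reduce it to that bound rather than redo the descent analysis from scratch. The first step is to observe that setting $\sigma=0$ makes the additive Gaussian term $\beta^t \sim \mathcal{N}(0,\sigma^2 I_{d\times d})$ in the global aggregation of Algorithm~\ref{alg:1} vanish identically, so the update for $\widetilde{w}^{t+1}$ collapses to a (possibly clipped) average of the local deltas $g_k^{t+1}$. In other words, Algorithm~\ref{alg:1} reduces to a partial-participation FedAvg-style solver specialized to the mean-regularized MTL local objective $f_k(w_k;\widetilde{w}^t)$.

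The second step is to invoke bound~\eqref{eq:th3result1} from Theorem~\ref{th:3} directly. That right-hand side splits cleanly into three additive pieces: an $\mathcal{O}(1/(mT))$ initialization term, the main optimization term $\mathcal{O}(L+\lambda+\lambda/p^2)\sum_i B_{iE}/T$ coming from the $(L+\lambda)$-smoothness-based descent inequality applied to each $f_k$, and the privacy-noise term $\mathcal{O}(d\sigma^2)$ which is the only place where $\sigma$ enters. Plugging $\sigma=0$ annihilates the noise term and leaves exactly the expression claimed in the corollary.

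The only thing worth verifying, and what I would flag as the main bookkeeping obstacle, is that no other ingredient of Theorem~\ref{th:3}'s proof tacitly depends on $\sigma>0$. Concretely I would check that: (i) the clipping hypothesis $\gamma\geq\max_{k,t}\|\nabla f_k(w_k^t;\widetilde{w}^t)\|_2$ is a purely optimization-side assumption that guarantees the $\min(1,\gamma/\|g_k^{t+1}\|_2)$ factor is inactive (or at least non-distortive) in the non-private setting; (ii) the learning-rate choice $\eta=1/(pL+(p+1/p)\lambda)$ comes from the descent lemma applied to $f_k$ and has no dependence on $\sigma$; and (iii) the noise contribution enters the descent inequality only through the variance of $\beta^t$, which is $d\sigma^2$, so setting $\sigma=0$ kills those contributions term-by-term without perturbing the estimates that produce the $B_{iE}$ bound.

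Once these three checks are in place, the corollary follows by a one-line substitution into~\eqref{eq:th3result1}. There is no genuine analytic difficulty here beyond ensuring that the proof of Theorem~\ref{th:3} (deferred to Appendix~\ref{appen:nonconv}) is structured so that the privacy-noise term is isolated from the optimization-error terms, which is the natural way to carry out that analysis.
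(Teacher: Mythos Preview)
Your proposal is correct and takes essentially the same approach as the paper: the corollary is stated immediately after Theorem~\ref{th:3} without a separate proof, and the paper simply remarks that setting $\sigma=0$ in~\eqref{eq:th3result1} recovers the non-private solver bound. Your additional bookkeeping checks (i)--(iii) are sound and make the one-line substitution fully rigorous, but the paper does not spell these out.
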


% \begin{corollary}
% When $\sigma=0$, Algorithm \ref{alg:1} with $(L+\lambda)$-smooth and nonconvex $f_k$ satisfies
% \label{cor:2}
% \begin{equation}
%     \begin{aligned}
%         \frac{1}{mT}\sum_{t=0}^{T-1}\sum_{k=1}^m\|\nabla f_k(w_k^{t};\widetilde{w}^t)\|^2\leq&\frac{4\left(L+\lambda-\frac{\lambda}{p^2}\right)\sum_{k=1}^m(f_k(w_k^0;\widetilde{w}^0)-f_k^*)}{mT}\\ &+\frac{\mathcal{O}\left(L+\lambda+\frac{\lambda}{p^2}\right)\sum_{t=0}^{T-1}B_{t+1}}{T}.
%     \end{aligned}
% \end{equation}
% \end{corollary}
\vspace{-0.05in}

By Theorem \ref{th:2}, given fixed $\epsilon$, $\sigma^2$ grows linearly with respect to $T$. Hence, given the same privacy guarantee, larger noise is required if the algorithm is run for more communication rounds. Note that in   Theorem \ref{th:3}, the upper bound consists of $\mathcal{O}(\frac{1}{m\epsilon^2})$, which means when there are more tasks, the upper bound becomes smaller while the privacy parameter remains the same. On the other hand, Theorem \ref{th:3} also shows a privacy-utility tradeoff using our Algorithm \ref{alg:1}: the upper bound grows inversely proportional to the privacy parameter $\epsilon$. We also provide a convergence analysis of Algorithm \ref{alg:1} with strongly-convex losses in Theorem~\ref{th:4} below (formal statement and proof in Appendix~\ref{appen:convex}).

\begin{figure*}[t!]
    \centering
    \vspace{0.05in}
    \begin{subfigure}{0.31\textwidth}
        \centering
        \includegraphics[width=\textwidth,trim=10 10 10 30]{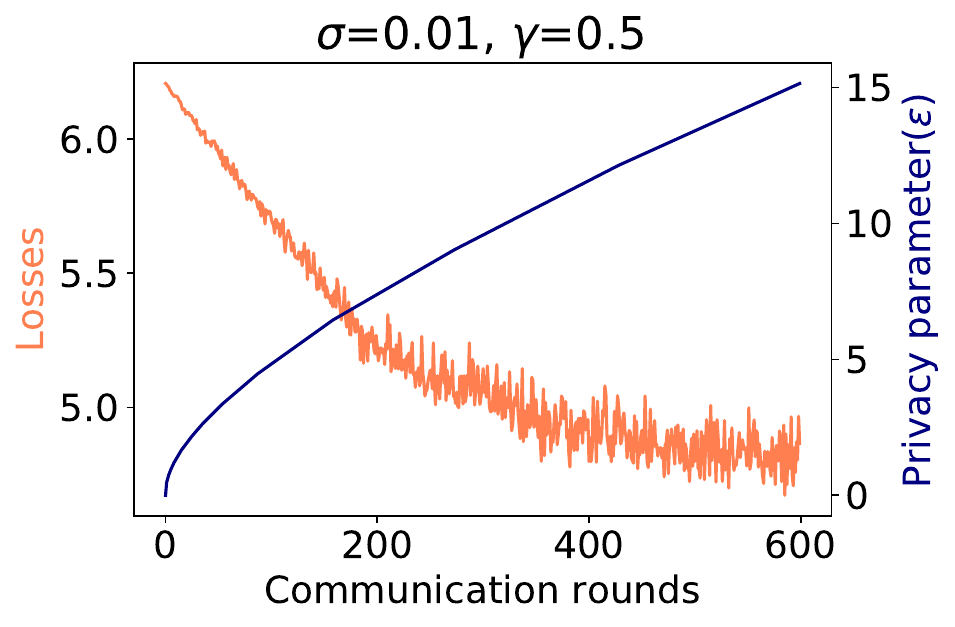}
        \caption{StackOverflow tag prediction}
    \end{subfigure}
    \begin{subfigure}{0.31\textwidth}
        \centering
        \includegraphics[width=\textwidth,trim=10 10 10 30]{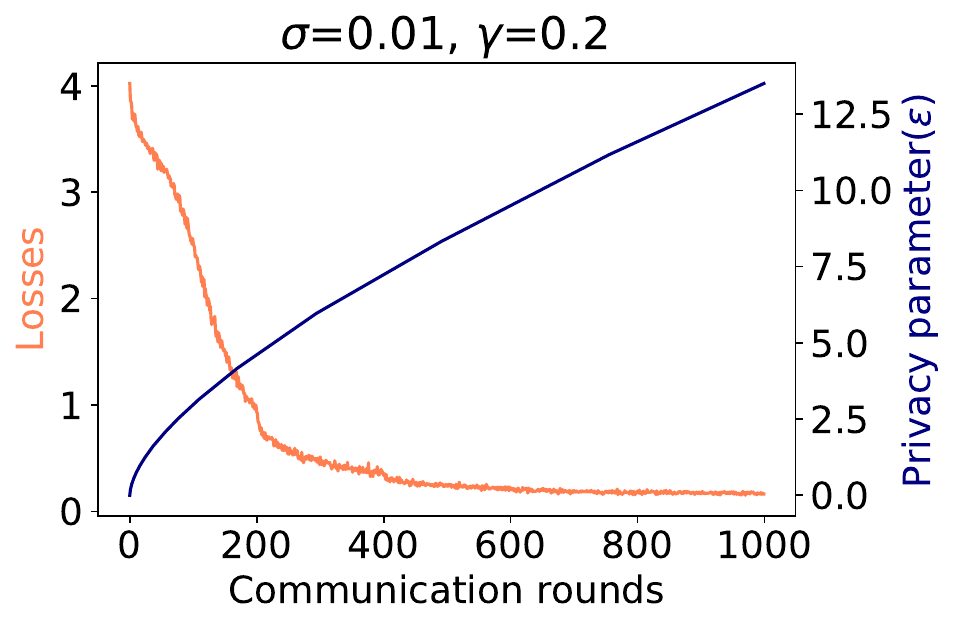}
        \caption{FEMNIST}
    \end{subfigure}
    \begin{subfigure}{0.31\textwidth}
        \centering
        \includegraphics[width=\textwidth,trim=10 10 10 30]{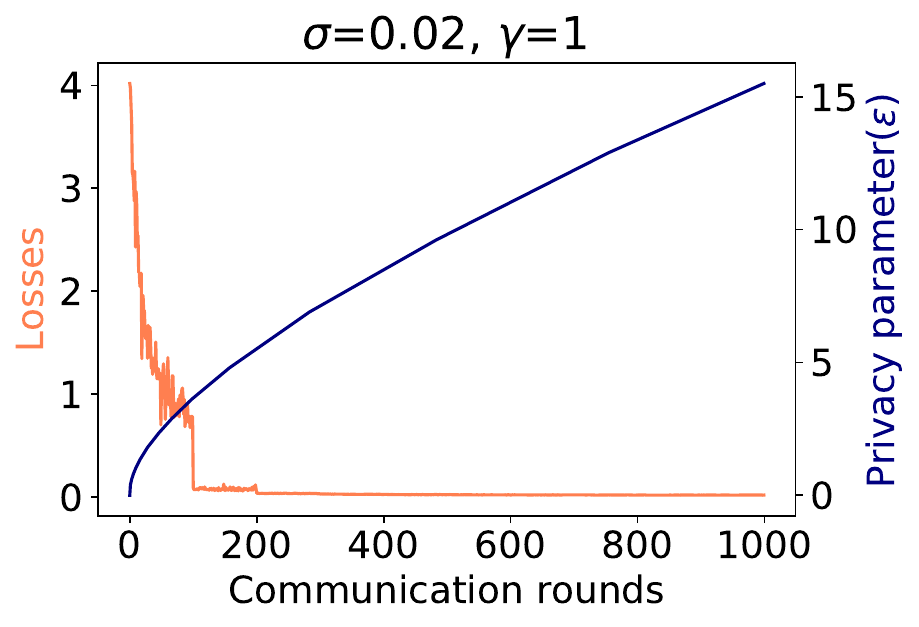}
        \caption{CelebA}
    \end{subfigure}
    \vspace{-0.1in}
    \caption{\small Loss and privacy parameter vs. communication rounds for PMTL. The blue line shows the change of $\epsilon$ in terms of number of communication rounds during training. The orange line shows the average training loss.} %
    \vspace{-0.09in}
    \label{fig:mtl}
\end{figure*}

\begin{theorem}[Convergence under strongly-convex loss (Informal)]
    \label{th:4}
    Let $f_k$ be $(L+\lambda)$-smooth and $(\mu+\lambda)$-strongly convex. Assume  $\gamma\geq\max_{k,t}\|\nabla_{w_k^t}f_k(w_k^t;\widetilde{w}^t)\|_2$. Let $w_k^*=\argmin_{w}f_k(w;\bar{w}^*)$ and $p=\frac{q}{m}$. If we set $\eta_t=\frac{cp}{Lp^2+\lambda p^2-2\lambda}$ for some constant $c$ such that  $\frac{1}{2}\leq\eta p(c-2)(\mu+\lambda)\leq1$, we have:
    \begin{equation}
        \begin{aligned}
            \frac{1}{m}\sum_{k=1}^mf_k(w_k^t;\widetilde{w}^t)-f_k(w_k^*;\widetilde{w}^*)\leq \mathcal{O}\left(\frac{1}{2^Tm}\right)+\mathcal{O}\left(\frac{2^EB}{2^E-1}\right)+\mathcal{O}\left(\frac{2^Ed\lambda\sigma^2}{2^E-1}\right).
        \end{aligned}
    \end{equation}
    % where $\Delta_t=\sum_{k=1}^mf_k(w_k^t;\widetilde{w}^t)-f_k(w_k^*;\widetilde{w}^*)$, $C= \mathcal{O}\left(\lambda\left(\sqrt{d}\sigma+\sqrt{\frac{2}{\lambda}B}\right)^2\right)$.
    
    Let $\sigma$ be chosen as in Theorem \ref{th:2}, then there exists  $T=\mathcal{O}\left(\frac{m}{\lambda d\gamma^2}\right)$ such that
    \begin{equation}
        \begin{aligned}
            \frac{1}{m}\sum_{k=1}^mf_k(w_k^t;\widetilde{w}^t)-f_k(w_k^*;\widetilde{w}^*)\leq \mathcal{O}\left(\frac{1}{2^Tm}\right)+\mathcal{O}\left(\frac{2^EB}{2^E-1}\right)+\mathcal{O}\left(\frac{2^E}{m(2^E-1)}\right)\frac{\log(1/\delta)}{\epsilon^2}
        \end{aligned}
        % \vspace{-0.2cm}
    \end{equation}
\end{theorem}

As with Corollary \ref{cor:2}, we recover the bound of the non-private mean-regularized MTL solver for $\sigma$=$0$. We provide convergence under convex loss in non-private scenario in Appendix \ref{appen:convex}.
% \begin{corollary}
% % \color{red}
% When $\sigma=0$, Algorithm \ref{alg:1} with $(L+\lambda)$-smooth and $(\mu+\lambda)$-strongly convex $f_k$ satisfies
% \begin{equation}
%          \frac{1}{m}\Delta_T\leq \frac{1}{2^T}\left(\frac{1}{m}\Delta_0-B\right)+\frac{B}{1-\frac{1}{2^E}}.
%         % \vspace{-0.2cm}
%     \end{equation}
    
% \end{corollary}

\textbf{Convergence to a neighborhood of the optimal.} It is worth noting that for both convex and non-convex case, given the nature of the mean-regularized MTL objective, the local objective will only converge to within a neighborhood of the optimal. Consider the following simple mean-estimation problem as an example. Assume that we have $m$ different clients/tasks, each with local data $x_i$ and local model $w_i$. The mean-regularized MTL objective for this problem would be $\frac{1}{m}\sum_{i}(x_i-w_i)^2+\frac{1}{m}\sum_i(w_i-\bar{w})^2$, which is greater than $\frac{1}{2m}\sum_i(x_i-w_i+w_i-\bar{w})^2 = \frac{1}{2m}\sum_i(x_i-\bar{w})^2$.  Note that this lower bound neither converges to 0 as $\bar{w}$ changes over time nor diminishes with increasing $m$. This result is therefore expected/standard, and is in line with other previous works that study the same objective but with different solvers, where similar dependencies on $\eta$ and $B$ can be seen~\citep{hanzely2020federated}.

\vspace{-0.1in}
\section{Experiments}
\vspace{-0.1in}
\label{sec:exps}
We empirically evaluate our private MTL solver on common  federated learning benchmarks~\cite{caldas2018leaf}. We first demonstrate the superior privacy-utility trade-off that exists when training our private MTL method compared with training a single global model (Section~\ref{sec:exp:tradeoff}). We also compare our method with simple finetuning---exploring the results of performing local finetuning after learning an MTL objective vs. a global objective (Section~\ref{sec:exp:finetuning}). Our code is publicly available at: \url{https://github.com/s-huu/PMTL}
% \vspace{-0.1in}
\subsection{Setup}
% \vspace{-0.1cm}
For all experiments, we evaluate the test accuracy and privacy parameter of our private MTL solver given a fixed clipping bound $\gamma$, variance of Gaussian noise $\sigma^2$, and communication rounds $T$. All experiments are performed on common federated learning benchmarks as a natural application of multi-task learning. We provide a detailed description of datasets and models in Appendix \ref{appen:datasets}. Each dataset is naturally partitioned among $m$ different clients. Under such a scenario, each client can be viewed as a task and the data that a client generates  is only visible locally.

\begin{figure*}[t!]
    \centering
    % \vspace{-0.1in}
    \begin{subfigure}{0.31\textwidth}
        \centering
        \includegraphics[width=0.99\textwidth,trim=10 10 10 30]{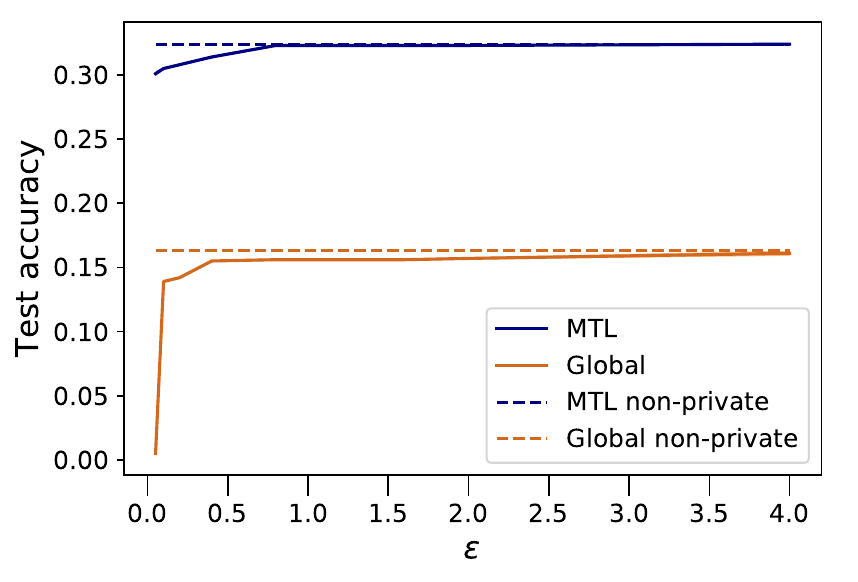}
        \subcaption{StackOverflow tag prediction}
    \end{subfigure}
    \begin{subfigure}{0.31\textwidth}
        \centering
        \includegraphics[width=0.99\textwidth,trim=10 10 10 30]{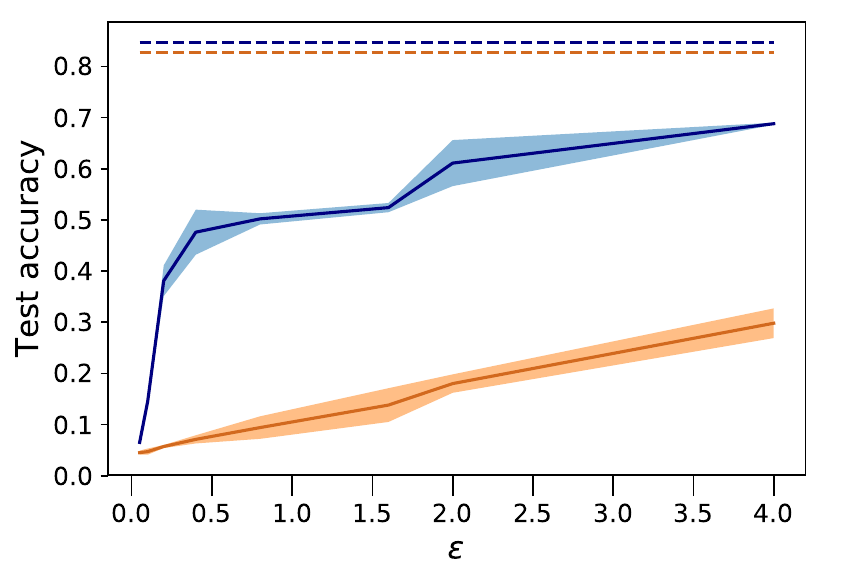}
        \subcaption{FEMNIST}
    \end{subfigure}
    \begin{subfigure}{0.31\textwidth}
        \centering
        \includegraphics[width=0.99\textwidth,trim=10 10 10 30]{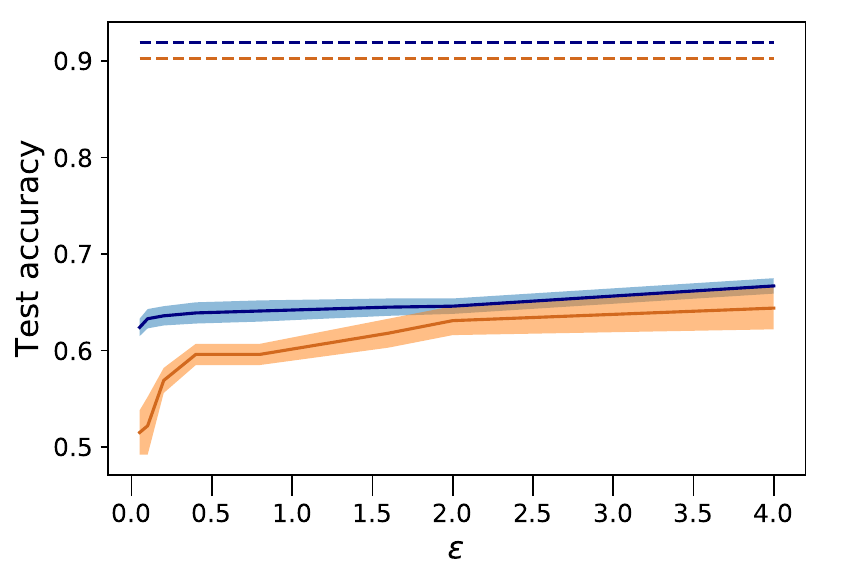}
        \subcaption{CelebA}
    \end{subfigure}
    \vspace{-0.09in}
    \caption{\small Comparison of training PMTL vs. a private global model. PMTL is able to retain advantages over global approaches in private settings. In addition, even in settings where the non-private MTL and global baselines are similar (e.g., FEMNIST, CelebA), there exist utility benefits at all levels of $\epsilon$ when using PMTL.} %
  \vspace{-0.2in}
    \label{fig:mtl_vs_global}
\end{figure*}

%\vspace{-0.1cm}
\begin{table*}[h!]
%  	\vspace{1em}
	\centering
	\label{table:finetune}
	\scalebox{0.65}{
	\begin{tabular}{l cccccccc} 
	   \toprule[\heavyrulewidth]
	     
        \textbf{FEMNIST} & \multicolumn{2}{c}{{\boldmath${\epsilon=0.1}$}} & \multicolumn{2}{c}{{\boldmath${\epsilon=0.8}$}} & \multicolumn{2}{c}{{\boldmath${\epsilon=2.0}$}}  &  \multicolumn{2}{c}{{\boldmath${\epsilon=\infty}$} } \\
        \cmidrule(r){2-3}\cmidrule(r){4-5}\cmidrule(r){6-7}\cmidrule(r){8-9}
         &  \bf{MTL} & \bf{Global}  & \bf{MTL} & \bf{Global} & \bf{MTL} & \bf{Global}  & \bf{MTL} & \bf{Global} \\
        \midrule
        Vanilla Finetuning	    & $\mathbf{0.645\pm 0.013}$	& $0.606\pm0.017$ & ${0.640\pm 0.016}$ & $\mathbf{0.648\pm 0.017}$ & $\mathbf{0.677\pm 0.008}$ & $0.653\pm0.010$	& $\mathbf{0.832\pm 0.005}$ & $0.812\pm 0.009$\\
        Mean-regularization	    & $\mathbf{0.608\pm 0.011}$	& $0.581\pm 0.011$ & $\mathbf{0.605\pm 0.008}$ & $0.574\pm 0.006$ & $\mathbf{0.656\pm 0.009}$	& $0.633\pm 0.003$ & $0.826\pm 0.011$	& $\mathbf{0.839\pm 0.006}$	\\
        Symmetrized KL	    & $\mathbf{0.486\pm 0.012}$	& $0.348\pm 0.005$ & $\mathbf{0.584\pm 0.012}$ & $0.481\pm 0.016$ & $\mathbf{0.662\pm 0.016}$ & $0.565\pm 0.019$ & $\mathbf{0.839\pm 0.006}$	& $0.829\pm0.015$ \\
        % \rowcolor{myred}
        EWC	& 	$\mathbf{0.663\pm0.002}$ & $0.556\pm 0.001$ & $0.595\pm 0.004$ &  $\mathbf{0.607\pm 0.007}$ & $\mathbf{0.681\pm 0.002}$ & $0.666\pm0.001$ & $\mathbf{0.837\pm0.001}$ &  $0.823\pm 0.005$\\
    \bottomrule[\heavyrulewidth]
	\end{tabular}}
		\caption{Comparison of PMTL vs. a private global model with different local finetuning methods. $\epsilon=\infty$ corresponds to no noise and clipping, i.e., training non-privately. The higher accuracy between MTL and Global given the same $\epsilon$ and finetuning method is \textbf{bolded}.}
\vspace{-0.1in}
\end{table*}

% \vspace{-0.1in}
\subsection{Privacy-Utility Trade-off of PMTL}
\label{sec:exp:tradeoff}
% \vspace{-0.1in}
We first explore the training loss ({\color{orange} \textbf{orange}}) and privacy parameter $\epsilon$ ({\color{myblue} \textbf{blue}}) as a function of communication rounds across three datasets (Figure \ref{fig:mtl}). Specifically, we evaluate the average loss for all the tasks and $\epsilon$ given a fixed $\delta$ after each  round, where $\delta$ is set to be $\frac{1}{m}$ for all experiments. In general, for a fixed clipping bound $\gamma$ and $\sigma$, we see that the method converges fairly quickly with respect to the resulting privacy, but that privacy guarantees may be sacrificed in order to achieve very small losses. 

% \vspace{.1cm}

To put these results in context, we also compare the test performance of our PMTL solver with that of training a global model. In particular, we use FedAvg~\citep{mcmahan2017communication} to train a global model. At each communication round, clients solve their local objective individually. While aggregating the model updates, the global learner applies Gaussian Mechanism and sends the noisy aggregation back to the clients. As a result, private FedAvg differs from our PMTL solver in the following two places: (i) the MTL objective solved locally by each task learner has a mean-regularized term; (ii) the MTL method evaluates on one task-specific model for every task while the global method evaluates all tasks on one global model. For each dataset, we select privacy parameter $\epsilon\in[0.05,0.1,0.2,0.4,0.8,1.6,2.0,4.0]$. For each $\epsilon$, we select the $\gamma$, $\sigma$, and $T$ that result in the best validation accuracy for a given $\epsilon$ and record the test accuracy. A detailed description of hyperparameters is listed in Appendix \ref{appen:hyperparameters}. We plot the test accuracy with respect to the highest validation accuracy given one $\epsilon$ for both private MTL model and private global model. Results are shown in Figure \ref{fig:mtl_vs_global}.

In all three datasets, our private MTL solver achieves higher test accuracy compared with training a private global model with FedAvg given the same $\epsilon$. Moreover, the proposed mean regularized MTL solver is able to retain an advantage over global model even with noisy aggregation. In particular, for small $\epsilon<1$, adding random Gaussian noise during global aggregation amplifies the test accuracy difference between our MTL solver and FedAvg. Under the StackOverflow  task, both methods obtain test accuracy close to the non private baseline for large $\epsilon$. To demonstrate that applying private MTL has an advantage over private global training more generally, we also compared our PMTL method with private FedProx~\citep{li2018federated}. The results (which mirror Figure~\ref{fig:mtl_vs_global}) are in Appendix \ref{appen:fedprox}.

% \vspace{-0.1in}
\subsection{PMTL with local finetuning}
\label{sec:exp:finetuning}
% \vspace{-0.1in}
Finally, in federated learning, previous works have shown local finetuning with different objectives is helpful for improving utility while training a differentially private global model~\citep{yu2020salvaging}. In this section, after obtaining a private global model, we explore locally finetuning the task specific models by optimizing different local objective functions. In particular, we use common objectives which (i) naively optimize the local empirical risk (Vanilla Finetuning), or (ii) encourage minimizing the distance between local and global model under different distance metrics (Mean-regularization, Symmetrized KL, EWC~\citep{kirkpatrick2017overcoming,yu2020salvaging}). The results are listed in Table 1. When $\epsilon=\infty$ (the non-private setting), global with mean-regularization finetuning outperforms all MTL+finetuning methods. However, when we add privacy to both methods, private MTL+finetuning has an advantage over global with finetuning on different finetuning objectives. In some cases, e.g. using Symmetrized KL, the test accuracy gap between private MTL+fintuning and private global+finetuning is amplified when $\epsilon$ is small compared to the case where no privacy is added during training. We also compare our PMTL+finetuning with training pure local model in Appendix \ref{appen:compare_local}.

\vspace{-0.1in}
\section{Conclusion and Future work}
\vspace{-0.1in}
In this work, we defined notions of client-level differential privacy for multi-task learning and proposed a simple method for private mean-regularized MTL. Theoretically, we provided both privacy and utility guarantees for our approach. Empirically, we showed that private MTL retains advantages over training a private global model on common federated learning benchmarks. In future work, we are interested in building on our results to explore privacy for more general forms of MTL, e.g., the family of objectives in~(\ref{eq:mtrl}) with arbitrary  $\Omega$, as well as studying how client-level privacy relates to issues of  fairness in multi-task settings.

\vspace{-0.1in}
\section{Acknowledgements}
ZSW was supported in part by the NSF FAI Award \#1939606, NSF Award \#2120667, a Google Faculty Research Award, a J.P. Morgan Faculty Award, a Meta Research Award, and a Cisco research grant.

\bibliography{references}
\bibliographystyle{tmlr}

\appendix
\section{Appendix}
\subsection{Privacy Analysis: Proof for Theorem \ref{th:1} and \ref{th:2}}
\label{appen:priv}

In the proofs of Theorem \ref{th:1} and \ref{th:2} we follow the line of reasoning in~\cite{abadi2016deep}, which analyzes the privacy of DPSGD. We first state the following lemma from~\cite{abadi2016deep}.
\begin{lemma}~\citep[][Theorem 1]{abadi2016deep}
\label{lemma:abadi}
There exists constants $c_1$ and $c_2$ such that given the sampling probability $p=\frac{q}{m}$ and the number of steps $T$, for any $\epsilon<c_1p^2T$, DPSGD is $(\epsilon,\delta)$-differentially private for any $\delta>0$ if we choose $\sigma\geq c_2\frac{p\sqrt{T\log(1/\delta)}}{\epsilon}$. 
    
\end{lemma}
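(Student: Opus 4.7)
The statement of Lemma~\ref{lemma:abadi} is an already-established result (Theorem~1 of Abadi et al., 2016), so the cleanest ``proof'' in the paper is simply to cite that work. However, if I were to reconstruct the argument from scratch, the plan would be to follow the moments-accountant framework introduced there. The idea is to track the log-moment generating function of the privacy-loss random variable through composition, rather than invoking strong composition directly, since the Sampled Gaussian Mechanism enjoys quadratic (rather than linear) privacy amplification under subsampling, and this tightening is exactly what produces the $p$ (as opposed to $\sqrt{p}$) factor on the right-hand side.

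First I would fix neighboring datasets and, for a single step of the Sampled Gaussian Mechanism (sampling rate $p$, unit $\ell_2$-sensitivity, noise $\mathcal{N}(0,\sigma^2 I)$), derive a sharp bound on the integer-order moment
\begin{equation*}
\alpha(\lambda) \;=\; \log \mathbb{E}_{o \sim \mu}\!\left[\left(\frac{\mu(o)}{\mu'(o)}\right)^{\lambda}\right] \;\leq\; \frac{p^{2}\lambda(\lambda+1)}{\sigma^{2}} \;+\; O\!\left(\frac{p^{3}\lambda^{3}}{\sigma^{3}}\right),
\end{equation*}
valid for integer $\lambda$ up to roughly $\sigma^{2}\log(1/(p\sigma))$. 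This is the technical heart of the argument: by writing the sampled output distribution as a mixture $(1-p)\mathcal{N}(0,\sigma^{2}I)+p\,\mathcal{N}(\Delta,\sigma^{2}I)$ and expanding the $\lambda$-th moment of the likelihood ratio via the binomial theorem, one isolates the leading $p^{2}/\sigma^{2}$ term, whose quadratic (not linear) dependence on $p$ is precisely the amplification-by-subsampling effect.

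Second, I would invoke adaptive composition: since log-moments are additive under (adaptive) composition, the $T$-fold composed mechanism satisfies $\alpha_{1:T}(\lambda) \leq T\,p^{2}\lambda(\lambda+1)/\sigma^{2}$. Finally, I would convert the moment bound to $(\epsilon,\delta)$-DP via the standard tail inequality $\delta \leq \exp(\alpha_{1:T}(\lambda)-\lambda\epsilon)$, optimizing $\lambda$ on the order of $\sigma^{2}\epsilon/(p^{2}T)$. Requiring $\alpha_{1:T}(\lambda)\leq \lambda\epsilon/2$ then yields $\sigma \geq c_{2}\,p\sqrt{T\log(1/\delta)}/\epsilon$; the side condition $\epsilon < c_{1}p^{2}T$ arises from ensuring that the optimizing $\lambda$ lies within the range of validity of the single-step moment bound, so that the subleading $O(p^{3}\lambda^{3}/\sigma^{3})$ term remains negligible.

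The main obstacle is Step~1, the sharp single-step moment bound with the correct $p^{2}/\sigma^{2}$ dependence; this is where subsampling yields quadratic rather than linear amplification and requires a delicate moment-by-moment analysis of the Gaussian mixture, together with uniform control of the tail of the likelihood ratio. Everything downstream --- additivity of log-moments under composition, the Chernoff-style conversion to $(\epsilon,\delta)$-DP, and the optimization over $\lambda$ --- is essentially routine once that bound is in hand, and will simply pin down the absolute constants $c_{1},c_{2}$.
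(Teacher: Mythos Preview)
Your proposal is correct: the paper does not prove this lemma at all but simply imports it verbatim as Theorem~1 of \citet{abadi2016deep}, exactly as you anticipated in your first sentence. Your optional reconstruction via the moments accountant --- single-step moment bound for the sampled Gaussian mechanism, additivity under composition, and the Chernoff-style tail conversion with optimization over $\lambda$ --- faithfully follows the original Abadi et al.\ argument, so there is nothing to add.
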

To prove Theorem \ref{th:1}, we also need the following definitions and lemmas.

\begin{definition}[$\ell_2$-sensitivity]
    Let $f:\mathcal{U}\rightarrow\mathbb{R}^d$ be some arbitrary function, the \text{$\ell_2$-sensitivity} of $f$ is defined as
    \begin{align}
        \Delta_2f=\max_{\text{adjacent }D,D'\in\mathcal{U}}\|f(D)-f(D')\|_2
    \end{align}
\end{definition}
\begin{definition}[R\'enyi Divergence]~\citep[][Definition 3]{mironov2017renyi}
    Let $P,Q$ be two probability distribution over the same probability space, and let $p,q$ be the respective probability density function. The R\'enyi Divergence with finite order $\alpha\neq 1$ is:
    \begin{equation}
        D_{\alpha}(P\|Q)=\frac{1}{\alpha-1}\ln\int_{\mathcal{X}}q(x)\bigg(\frac{p(x)}{q(x)}\bigg)^{\alpha}dx
    \end{equation}
\end{definition}
\begin{definition}[$(\alpha,\epsilon)$-R\'enyi Differential Privacy]~\citep[][Definition 4]{mironov2017renyi}
    A randomized mechanism $f:\mathcal{D}\rightarrow\mathcal{R}$ is said to have $(\alpha,\epsilon)$-R\'enyi Differential Privacy if for all adjacent $D,D'\in\mathcal{D}$ it holds that:
    \begin{equation}
        D_{\alpha}(f(D)\|f(D'))\leq \epsilon.
    \end{equation}
\end{definition}
\begin{lemma}~\citep[][Corollary 3]{mironov2017renyi}
    \label{lemma:rdp}
    The Gaussian mechanism is $(\alpha,\alpha(2(\Delta_2f)^2/\sigma^2))$-Renyi Differentially Private.
\end{lemma}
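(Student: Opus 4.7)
The plan is to evaluate the R\'enyi divergence between the two output distributions of the Gaussian mechanism on neighboring datasets and then invoke the $\ell_2$-sensitivity bound on $f$. On inputs $D$ and $D'$ the mechanism produces samples from $P=\mathcal{N}(f(D),\sigma^2 I_d)$ and $Q=\mathcal{N}(f(D'),\sigma^2 I_d)$. Because translating both measures by the same vector leaves their R\'enyi divergence unchanged, it suffices to compute $D_\alpha(\mathcal{N}(\mu,\sigma^2 I_d)\,\|\,\mathcal{N}(0,\sigma^2 I_d))$ where $\mu=f(D)-f(D')$.

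First I would substitute the Gaussian densities directly into the definition $D_\alpha(P\|Q)=\tfrac{1}{\alpha-1}\ln\int q(x)(p(x)/q(x))^\alpha\,dx$. Because the two Gaussians share the same covariance $\sigma^2 I_d$, the likelihood ratio simplifies to $p(x)/q(x)=\exp\bigl(\tfrac{1}{\sigma^2}\langle\mu,x\rangle-\tfrac{1}{2\sigma^2}\|\mu\|^2\bigr)$, so the integrand $q(x)(p(x)/q(x))^\alpha$ is proportional to $\exp\bigl(-\tfrac{1}{2\sigma^2}(\|x\|^2-2\alpha\langle\mu,x\rangle+\alpha\|\mu\|^2)\bigr)$.

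Next, I would complete the square in the exponent, rewriting it as a Gaussian centered at $\alpha\mu$ times the constant $\exp\bigl(\tfrac{\alpha(\alpha-1)\|\mu\|^2}{2\sigma^2}\bigr)$. The Gaussian factor integrates to $1$, so the entire integral equals that constant; taking logs and dividing by $\alpha-1$ gives $D_\alpha(P\|Q)=\tfrac{\alpha\|\mu\|^2}{2\sigma^2}$. Applying $\|\mu\|_2\leq\Delta_2 f$ then produces the stated R\'enyi differential privacy guarantee (modulo the explicit constant in the lemma's parameterization).

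The main obstacle is merely the algebraic bookkeeping inside the Gaussian integral---completing the square without dropping normalization factors---which is greatly simplified here because both distributions share the same covariance, so that the log-determinant term that would otherwise appear in the general R\'enyi divergence between Gaussians vanishes. No concentration inequalities or composition arguments are required, which is precisely why this lemma can serve as a clean building block: combined with subsampling amplification and advanced composition over the $T$ communication rounds (as in Lemma~\ref{lemma:abadi}), and then with the Billboard Lemma to pass from DP to JDP, it feeds directly into the end-to-end guarantees of Theorems~\ref{th:1} and~\ref{th:2}.
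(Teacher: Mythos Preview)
Your argument is correct and is exactly the standard direct computation behind Mironov's Corollary~3: the paper itself does not reprove this lemma but simply cites it, so there is no alternative approach to compare against. Your calculation in fact yields the tight bound $D_\alpha(P\|Q)=\alpha\|\mu\|^2/(2\sigma^2)\le \alpha(\Delta_2 f)^2/(2\sigma^2)$, which is sharper than the constant written in the lemma statement here; your parenthetical ``modulo the explicit constant'' correctly flags that discrepancy.
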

\begin{lemma}~\citep[][Proposition 3]{mironov2017renyi}
    \label{lemma:rdp-dp}
    If $f$ is $(\alpha,\epsilon)$-RDP, then it is $(\epsilon+\frac{\log(1/\delta)}{\alpha-1},\delta)$-DP for all $\delta>0$.
\end{lemma}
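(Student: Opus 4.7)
The plan is to establish this standard RDP-to-DP conversion via a Markov-inequality tail bound on the privacy loss random variable. First, I would fix arbitrary adjacent inputs $D,D'$ and let $p,q$ be the densities of $f(D),f(D')$ on the common output space. Define the privacy loss $L(x) = \log\bigl(p(x)/q(x)\bigr)$. The $(\alpha,\epsilon)$-RDP hypothesis then rewrites as a moment bound $\mathbb{E}_{x\sim f(D)}\bigl[e^{(\alpha-1)L(x)}\bigr] \le e^{(\alpha-1)\epsilon}$ on the exponential of $(\alpha-1)L$, which is the only property of $f$ that the rest of the argument will touch.

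Next, applying Markov's inequality to the nonnegative variable $e^{(\alpha-1)L}$ at threshold $e^{(\alpha-1)\epsilon'}$ gives the tail bound $\Pr_{x\sim f(D)}[L(x)\ge\epsilon'] \le e^{(\alpha-1)(\epsilon-\epsilon')}$. Calibrating $\epsilon' := \epsilon + \frac{\log(1/\delta)}{\alpha-1}$ makes the right-hand side exactly $\delta$; this is the one piece of bookkeeping and is the only place the specific DP parameter stated in the lemma enters.

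Finally, for any measurable set $S$, I would split $\Pr[f(D)\in S]$ into contributions from $\{L<\epsilon'\}$ and $\{L\ge\epsilon'\}$. On the first event, $p(x)\le e^{\epsilon'}q(x)$ pointwise, so the first piece is at most $e^{\epsilon'}\Pr[f(D')\in S]$; the second piece is bounded by the tail probability, which is at most $\delta$ by the previous step. Combining yields the desired $(\epsilon',\delta)$-DP inequality, and since $D,D'$ were arbitrary adjacent inputs this completes the proof. I do not foresee any substantive obstacle: the entire argument is a textbook Chernoff-style derivation once the privacy loss is written down, and the only subtlety is choosing the threshold $\epsilon'$ so that the Markov bound lands at exactly $\delta$ without extraneous constants.
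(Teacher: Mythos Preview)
Your argument is correct and is exactly the standard Chernoff/Markov derivation of the RDP-to-DP conversion; the paper itself does not prove this lemma but merely cites \cite{mironov2017renyi}, so your proposal supplies the omitted textbook proof. The only implicit assumption worth flagging is $\alpha>1$ (so that $\alpha-1>0$ and the Markov step goes in the right direction), which is the regime in which the lemma is stated and used.
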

We begin by proving the first part of Theorem 1, where $q\neq m$.
\begin{proof}[Proof for Theorem \ref{th:1}: $q\neq m$]
Note that aggregation step in line 8 of Algorithm \ref{alg:1} can be rewritten as
\begin{align}
    \widetilde{w}^{t+1} &=\widetilde{w}^{t}+\frac{1}{|S_t|}\sum_{k\in S_t} g_k^{t+1}\min\bigg(1,\frac{\gamma}{\|g_k^{t+1}\|_2}\bigg)+\mathcal{N}(0,\sigma^2\bf{I}_{d\times d})\\
    &=\widetilde{w}^{t}+\frac{1}{q}\sum_{k\in S_t} g_k^{t+1}\min\bigg(1,\frac{\gamma}{\|g_k^{t+1}\|_2}\bigg)+\mathcal{N}\left(0,\left(\frac{\sigma}{\gamma}\right)^2\gamma^2\bf{I}_{d\times d}\right)\\
    &=\widetilde{w}^{t}+\frac{1}{q}\left(\sum_{k\in S_t} g_k^{t+1}\min\bigg(1,\frac{\gamma}{\|g_k^{t+1}\|_2}\bigg)+\mathcal{N}\left(0,\left(\frac{q\sigma}{\gamma}\right)^2\gamma^2\bf{I}_{d\times d}\right)\right).
\end{align}
From here, we can directly apply Lemma \ref{lemma:abadi} with $\sigma$ set to be $\frac{q\sigma}{\gamma}$. Hence, we conclude that when $q\neq m$, there exists constants $c_1$ and $c_2$ such that given the number of steps $T$, for any $\epsilon<c_1\frac{q^2}{m^2}T$, $\mathcal{M}^{1:T}$ is $(\epsilon,\delta)$-differentially private for any $\delta>0$ if we choose $\sigma\geq c_2\frac{\gamma\sqrt{T\log(1/\delta)}}{m\epsilon}$. 
\end{proof}
This proof can extend to the case where $q=m$. In the remainder of this section, we provide a proof that gives a more specific bound on the variance $\sigma^2$ in the case where $q=m$.

\begin{proof}[Proof for Theorem \ref{th:1}: $q=m$]
Define $H^t:\prod_{i=1}^m\mathcal{D}_i\times\mathcal{W}\rightarrow\mathcal{W}$ as
\begin{align}
    H^t(\{D_i\},\{h_i(\cdot)\},\widetilde{w}^t) = \widetilde{w}^t+\frac{1}{m}\sum_{i=1}^mh_i^t(D_i,\widetilde{w}^t).
\end{align}
As a result, we have $\mathcal{M}^{t}(\{D_i\},\{h_i(\cdot)\},\widetilde{w}^t,\sigma)=H^t(\{D_i\},\{h_i(\cdot)\},\widetilde{w}^t)+\beta^t$.

By Lemma \ref{lemma:rdp}, $\mathcal{M}^{t}$ is $(\alpha,2\alpha(\Delta_2H^t)^2/d\sigma^2)$-Renyi Differentially Private. Note that
\begin{align}
    (\Delta_2H^t)^2 &= \max_j\max_{\text{adjacent }D_j,D_j'\in\mathcal{D}_j}\left\|H^t(\{D_1,\cdots,D_j,\cdots,D_m\})-H^t(\{D_1,\cdots,D_j',\cdots,D_m\})\right\|^2\\
    &= \max_j\max_{\text{adjacent }D_j,D_j'\in\mathcal{D}_j}\left\|\frac{1}{m}h_j^t(D_j,\widetilde{w}^t)-\frac{1}{m}h_j^t(D_j',\widetilde{w}^t)\right\|^2\\
    &= \frac{1}{m^2}\max_j\max_{\text{adjacent }D_j,D_j'\in\mathcal{D}_j}\left\|h_j^t(D_j,\widetilde{w}^t)-h_j^t(D_j',\widetilde{w}^t)\right\|^2\\
    &= \frac{1}{m^2}\max_j(\Delta_2h_j^t)^2.
\end{align}
Hence, by sequential composition of R\'enyi Differential Privacy~\citep[][Proposition 1]{mironov2017renyi},
    $\mathcal{M}^{1:T}$ is $(\alpha,\sum_{i=1}^T2\alpha\max_j(\Delta_2h_j^t)^2/m^2\sigma^2)$-RDP.

By Lemma \ref{lemma:rdp-dp}, we know that $\mathcal{M}^{1:T}$ is 
    $(\sum_{i=1}^T2\alpha\max_j(\Delta_2h_j^t)^2/m^2\sigma^2+\frac{\log(1/\delta)}{\alpha-1},\delta)$-DP.
    
Plugging in $\alpha=\frac{4\log(1/\delta)}{\epsilon}$, $\sigma=\frac{4\gamma\sqrt{T\log(1/\delta)}}{\epsilon m}$, we have
    \begin{align}
        \sum_{i=1}^T2\alpha\max_j(\Delta_2h_j^t)^2/m^2\sigma^2+\frac{\log(1/\delta)}{\alpha-1} 
        &\leq \sum_{i=1}^T2\alpha\gamma^2/m^2\sigma^2+\frac{\log(1/\delta)}{\alpha-1} \\
        &=\frac{2\frac{4\log(1/\delta)}{\epsilon} \gamma^2}{m^2(\frac{4\gamma\sqrt{T\log(1/\delta)}}{\epsilon m})^2}+\frac{\log(1/\delta)}{\frac{4\log(1/\delta)}{\epsilon}-1}\\
        &\leq \frac{\epsilon}{2}+\frac{\epsilon}{2}\\
        &=\epsilon.
    \end{align}
Hence, $\mathcal{M}^{1:T}$ is $(\epsilon,\delta)$-DP if we choose $\sigma=\frac{4\gamma\sqrt{T\log(1/\delta)}}{\epsilon m}$.
\end{proof}
By Theorem \ref{th:1} and \textit{Billboard Lemma}, it directly follows that Algorithm \ref{alg:1} is $(\epsilon,\delta)-$JDP.
\begin{proof}[Proof for Theorem \ref{th:2}]
    Theorem \ref{th:1} shows that Algorithm \ref{alg:1} consists of a $(\epsilon,\delta)$-DP process to produce global model. After that each task learner trains local model with the DP global model and its private data. By definition of BP, it directly follows that Algorithm \ref{alg:1} is $(\epsilon,\delta)$-BP.
\end{proof}

\subsection{Convergence Analysis(nonconvex):}
\label{appen:nonconv}
We first present the formal statement of Theorem 3.
\begin{theorem}[Convergence under nonconvex loss]
    \label{th:3_formal}

    Let $f_k$ be $(L+\lambda)$-smooth. Assume $\gamma$ is sufficiently large such that $\gamma\geq\max_{k,t}\|\nabla_{w_k^t}f_k(w_k^t;\widetilde{w}^t)\|_2$. Further let $f_k^*=\min_{w,\bar{w}}f_k(w;\bar{w})$ and $p=\frac{q}{m}$.
    % Further, assume the expected distance between the task specific model and the averaged model $\mathbb{E}[\|w_k^t-\bar{w}^t\|^2]$ is bounded by $M^2$ for all $k$. 
    If we use a fixed learning rate $\eta_t=\eta=\frac{1}{pL+\left(p+\frac{1}{p}\right)\lambda}$, Algorithm \ref{alg:1} satisfies:
    \begin{equation}
    \label{eq:th3result1_full}
    \begin{aligned}
        \frac{1}{mT}\sum_{t=0}^{T-1}\sum_{k=1}^m\|\nabla f_k(w_k^{t};\widetilde{w}^t)\|^2\leq &\frac{\left(4\left(L+\lambda+\frac{1}{p^2}\lambda\right)+\frac{2\lambda}{E\left(Lp^2+\lambda p^2+2\lambda\right)}\right)\sum_{k=1}^m(f_k(w_k^0;\widetilde{w}^0)-f_k^*)}{mT}\\&+\frac{\mathcal{O}\left(L +\lambda +\frac{\lambda}{p^2}\right)\sum_{i=1}^{T/E}B_{iE}}{T}+\mathcal{O}\left(\frac{Ld\lambda +d\lambda^2+d\lambda^2/p^2}{mE} \right)\sigma^2.
    \end{aligned}
    \end{equation}
    where
    \begin{align}
        B_t = \max_kf_k(w_k^t;\widetilde{w}^t).
    \end{align}
    Let $\sigma$ chosen as we set in Theorem \ref{th:2}. Take $T=\mathcal{O}\left(\frac{m}{\lambda d\gamma^2}\right)$, the right hand side is bounded by 
    \begin{equation}
        \label{eq:th3result2_full}
        \begin{aligned}
            \frac{1}{mT}\sum_{t=0}^{T-1}\sum_{k=1}^m\|\nabla f_k(w_k^{t};\widetilde{w}^t)\|^2\leq &\frac{\mathcal{O}\left(4\left(L+\lambda+\frac{1}{p^2}\lambda\right)+\frac{2\lambda}{E\left(Lp^2+\lambda p^2+2\lambda\right)}\right)}{m}\\
            &+\mathcal{O}\left(\frac{L+\lambda+\frac{\lambda}{p^2}}{E}\right)B+\mathcal{O}\left(\frac{L+\lambda+\frac{\lambda}{p^2}}{mE}\right)\frac{\log(1/\delta)}{\epsilon^2}.
        \end{aligned}
    \end{equation}
\end{theorem}

\begin{proof}[Proof for Theorem \ref{th:3_formal}]
Let $w_k^*=\argmin_wf_k(w;\bar{w}^*)$. Let $I_k^t$ be the random variable indicating whether task $k$ is selected in communication round $t$. Note that the probability task learner $k$ is selected in any arbitrary communication round $p_k=\frac{\begin{pmatrix}m-1\\q-1\end{pmatrix}}{\begin{pmatrix}m\\q\end{pmatrix}}=\frac{q}{m}$. Thus $\mathbb{E}[I_k^t]=p_k=\frac{q}{m}$. By $L$-smoothness of $f_k$, we have
\begin{align}
    \mathbb{E}[f_k(w_k^{t+1};\widetilde{w}^t)-f_k(w_k^{t};\widetilde{w}^t)] &\leq\mathbb{E}\left[\langle\nabla f_k(w_k^t;\widetilde{w}^t),w_k^{t+1}-w_k^t\rangle+\frac{L}{2}\|w_k^{t+1}-w_k^t\|^2\right]\\
    &=\mathbb{E}\left[\langle\nabla f_k(w_k^t;\widetilde{w}^t),\eta_tI_k^t\nabla f_k(w_k^t;\widetilde{w}^t)\rangle+\frac{L}{2}\|\eta_tI_k^t\nabla f_k(w_k^t;\widetilde{w}^t)\|^2\right]\\
    &=\left(\frac{L+\lambda}{2}\eta_t^2p_k^2-\eta_tp_k\right)\|\nabla f_k(w_k^{t};\widetilde{w}^t)\|^2.
\end{align}

In the case where $t+1\not\equiv 0\mod E$, i.e. $t+1$ is not a communication round, $\widetilde{w}^{t+1}=\widetilde{w}^t$. Therefore, we have 
\begin{align}
    \mathbb{E}[f_k(w_k^{t+1};\widetilde{w}^{t+1})-f_k(w_k^{t};\widetilde{w}^t)] \leq\left(\frac{L+\lambda}{2}\eta_t^2p_k^2-\eta_tp_k\right)\|\nabla f_k(w_k^{t};\widetilde{w}^t)\|^2.
\end{align}
In the case where $t+1\equiv 0\mod E$, we have 
\begin{align}
    \mathbb{E}[f_k(w_k^{t+1};\widetilde{w}^{t+1})-f_k(w_k^{t};\widetilde{w}^t)] \leq \underbrace{\mathbb{E}[f_k(w_k^{t+1};\widetilde{w}^{t+1})-f_k(w_k^{t+1};\widetilde{w}^t)]}_{\text{B}}+\left(\frac{L+\lambda}{2}\eta_t^2p_k^2-\eta_tp_k\right)\|\nabla f_k(w_k^{t};\widetilde{w}^t)\|^2.
\end{align}
% and
% \begin{align}
%     \mathbb{E}[f_k(w_k^{t+1};\widetilde{w}^{t+1})-f_k(w_k^{*};\bar{w}^*)]
%     \leq \|(L+\lambda)(w_k^{t}-w_k^{t+1})\|\|w_k^{t}-w_k^*\|+\mathbb{E}[f_k(w_k^{t+1};\widetilde{w}^{t+1})-f_k(w_k^{t+1};\widetilde{w}^t)]+\gamma_1.
% \end{align}
It suffices to bound $\text{B}$:
\begin{align}
    \text{B} &=\mathbb{E}\Bigg[\frac{\lambda}{2}\|w_k^{t+1}-\widetilde{w}^{t+1}\|^2-\frac{\lambda}{2}\|w_k^{t+1}-\widetilde{w}^{t}\|^2\Bigg]\\
    &= \frac{\lambda}{2}\mathbb{E}[\|\widetilde{w}^t-\widetilde{w}^{t+1}\|\|2w_k^{t+1}-\widetilde{w}^t-\widetilde{w}^{t+1}\|]\\
    &\leq\frac{\lambda}{2}\sqrt{\mathbb{E}[\|\widetilde{w}^t-\widetilde{w}^{t+1}\|^2]\mathbb{E}[\|2w_k^{t+1}-\widetilde{w}^t-\widetilde{w}^{t+1}\|^2]}\\
    &=\frac{\lambda}{2}\sqrt{\mathbb{E}[\|\widetilde{w}^t-\widetilde{w}^{t+1}\|^2]}\sqrt{\mathbb{E}[\|(\widetilde{w}^{t+1}-\widetilde{w}^t)+2(w_k^{t+1}-\widetilde{w}^{t+1})\|^2]}\\
    &\leq\frac{\lambda}{2}\sqrt{\mathbb{E}[\|\widetilde{w}^t-\widetilde{w}^{t+1}\|^2]}\sqrt{\mathbb{E}[\|\widetilde{w}^{t+1}-\widetilde{w}^t\|^2]+4\|w_k^{t+1}-\widetilde{w}^{t+1}\|^2+4\mathbb{E}[\|\widetilde{w}^{t+1}-\widetilde{w}^t\|\|w_k^{t+1}-\widetilde{w}^{t+1}\|]}\\
    &\leq\frac{\lambda}{2}\sqrt{\underbrace{\mathbb{E}[\|\widetilde{w}^t-\widetilde{w}^{t+1}\|^2]}_{\text{C}_1}}\sqrt{\mathbb{E}[\|\widetilde{w}^{t+1}-\widetilde{w}^t\|^2]+4\underbrace{\|w_k^{t+1}-\widetilde{w}^{t+1}\|^2}_{\text{C}_2}+4\sqrt{\mathbb{E}[\|\widetilde{w}^{t+1}-\widetilde{w}^t\|^2]\|w_k^{t+1}-\widetilde{w}^{t+1}\|^2}}
\end{align}
where the first and third inequality follows from Cauchy-Schwartz Inequality: $\mathbb{E}[XY]\leq\sqrt{\mathbb{E}[X^2]\mathbb{E}[Y^2]}$. We can then upper bound  $\text{C}_1$ and $\text{C}_2$.
\begin{align}
    \text{C}_1 &=\mathbb{E}\left[\left \|\frac{1}{q}\sum_{k\in S_t} g_k^{t+1}\min\bigg(1,\frac{\gamma}{\|g_k^{t+1}\|_2}\bigg)+\beta^t\right\|^2\right]\\
    &\leq \left(\sqrt{\mathbb{E}\left[\left\|\frac{1}{q}\sum_{k=1}^m I_k^{t+1}g_k^{t+1}\min\bigg(1,\frac{\gamma}{\|g_k^{t+1}\|_2}\bigg)\right\|^2\right]}+\sqrt{\mathbb{E}[\|\beta^t\|^2]}\right)^2\\
    &= \left(\sqrt{\mathbb{E}\left[\left\|\frac{1}{q}\sum_{k=1}^m I_k^{t+1}g_k^{t+1}\min\bigg(1,\frac{\gamma}{\|g_k^{t+1}\|_2}\bigg)\right\|^2\right]}+\sqrt{d}\sigma\right)^2\\
    % &= \mathbb{E}\Bigg[\bigg\|\frac{1}{m}\sum_{k=1}^m(w_k^t-w_k^{t+1})\bigg\|^2\Bigg]+2d\sigma^2+2\sqrt{2d}\sigma\sqrt{\mathbb{E}\Bigg[\bigg\|\frac{1}{m}\sum_{k=1}^m(w_k^t-w_k^{t+1})\bigg\|^2\Bigg]}\\
    &\leq \left(\sqrt{\frac{m}{q^2}\sum_{k=1}^m\mathbb{E}\left[\left\| I_k^{t+1}g_k^{t+1}\min\bigg(1,\frac{\gamma}{\|g_k^{t+1}\|_2}\bigg)\right\|^2\right]}+\sqrt{d}\sigma\right)^2\\
    &\leq \left(\sqrt{\frac{m}{q^2}\sum_{k=1}^m\mathbb{E}\left[\left\| I_k^{t+1}\eta_t\nabla f_k(w_k^t)\min\bigg(1,\frac{\gamma}{\eta_t\|\nabla f_k(w_k^t)\|_2}\bigg)\right\|^2\right]}+\sqrt{d}\sigma\right)^2\\
    &\leq \left(\sqrt{\frac{1}{m}\sum_{k=1}^m\left\| \eta_t\nabla f_k(w_k^t)\min\bigg(1,\frac{\gamma}{\eta_t\|\nabla f_k(w_k^t)\|_2}\bigg)\right\|^2}+\sqrt{d}\sigma\right)^2.
    % &\leq \gamma^2+d\sigma^2+2\sqrt{d}\sigma(\gamma)\\
    % &=(\gamma+\sqrt{d}\sigma)^2.
\end{align}
Denote $h(t)=\sqrt{\frac{1}{m}\sum_{k=1}^m\left\| \eta_t\nabla f_k(w_k^t)\min\bigg(1,\frac{\gamma}{\eta_t\|\nabla f_k(w_k^t)\|_2}\bigg)\right\|^2}$. We have:
\begin{align}
    \text{C}_2 &\leq \frac{2}{\lambda}\frac{\lambda}{2}\|w_k^{t+1}-\widetilde{w}^{t+1}\|^2\\
    &\leq \frac{2}{\lambda}f_k(w_k^{t+1};\widetilde{w}^{t+1})\\
    &=\frac{2}{\lambda}B_{t+1}
\end{align}
Plugging the bounds for $\text{C}_1$ and $\text{C}_2$ into $\text{B}$ yields:
\begin{align}
    \text{B} &\leq \frac{\lambda}{2}(h(t)+\sqrt{d}\sigma)\left(h(t)+\sqrt{d}\sigma+2\sqrt{\frac{2}{\lambda}B_{t+1}}\right).
\end{align}
Denote the right hand side as $\beta(t)$, we have 
\begin{align}
    \mathbb{E}[f_k(w_k^{t+1};\widetilde{w}^{t+1})-f_k(w_k^{t};\widetilde{w}^t)] \leq \beta(t)+\left(\frac{L+\lambda}{2}\eta_t^2p_k^2-\eta_tp_k\right)\|\nabla f_k(w_k^{t};\widetilde{w}^t)\|^2.
\end{align}
% and
% \begin{align}
%     \mathbb{E}[f_k(w_k^{t+1};\widetilde{w}^{t+1})-f_k(w_k^{*};\bar{w}^*)]
%     \leq \|(L+\lambda)(w_k^{t}-w_k^{t+1})\|\|w_k^{t}-w_k^*\|+\gamma_2+\gamma_1.
% \end{align}
Let $\delta_t = \mathbb{E}[f_k(w_k^{t};\widetilde{w}^{t})-f_k(w_k^{*};\bar{w}^*)]$, we have
\begin{align}
    \delta_{t+1}&\leq\delta_t+\beta(t)+\left(\frac{L+\lambda}{2}\eta_t^2p_k^2-\eta_tp_k\right)\|\nabla f_k(w_k^{t};\widetilde{w}^t)\|^2.
\end{align}
In the nonconvex case, we have
\begin{align}
    \sum_{t=0}^{T-1}\left(\eta_tp_k-\frac{L+\lambda}{2}\eta_t^2p_k^2\right)\|\nabla f_k(w_k^{t};\widetilde{w}^t)\|^2-\beta(t)\leq f_k(w_k^0;\widetilde{w}^0)-f_k^*
\end{align}
Summing over $k$ on the left handed side, when $\gamma$ is large enough so that no clipping happens we have
\begin{align}
    &\begin{aligned}
        \sum_{k=1}^m &\sum_{t+1\equiv 0\mod E}\left(\left(\eta_tp_k-\frac{L+\lambda}{2}\eta_t^2p_k^2\right)\|\nabla f_k(w_k^{t};\widetilde{w}^t)\|^2-\beta(t)\right)\\ &+\sum_{t+1\not\equiv 0\mod E}\left(\eta_tp_k-\frac{L+\lambda}{2}\eta_t^2p_k^2\right)\|\nabla f_k(w_k^{t};\widetilde{w}^t)\|^2
    \end{aligned}\\ 
    &=
    \begin{aligned}
        &\sum_{t+1\equiv 0\mod E}\left(\eta_tp-\frac{L+\lambda}{2}\eta_t^2p^2\right)\sum_{k=1}^m\|\nabla f_k(w_k^{t};\widetilde{w}^t)\|^2-m\beta(t)\\&+\sum_{t+1\not\equiv 0\mod E}\left(\eta_tp-\frac{L+\lambda}{2}\eta_t^2p^2\right)\sum_{k=1}^m\|\nabla f_k(w_k^{t};\widetilde{w}^t)\|^2
    \end{aligned}\\
    &= 
    \begin{aligned}
        &\sum_{t+1\equiv 0\mod E}\left(\eta_tp-\frac{L+\lambda}{2}\eta_t^2p^2\right)\sum_{k=1}^m\|\nabla f_k(w_k^{t};\widetilde{w}^t)\|^2\\&-\frac{\lambda}{2}\left(mh^2(t)+\left(2\sqrt{d}\sigma+2\sqrt{\frac{2}{\lambda}B_{t+1}}\right)mh(t)+ m\left(d\sigma^2+2\sigma\sqrt{\frac{2d}{\lambda}B_{t+1}}\right)\right)\\&+\sum_{t+1\not\equiv 0\mod E}\left(\eta_tp-\frac{L+\lambda}{2}\eta_t^2p^2\right)\sum_{k=1}^m\|\nabla f_k(w_k^{t};\widetilde{w}^t)\|^2
    \end{aligned}\\
    &= 
    \begin{aligned}
        &\sum_{t=0}^{T-1}\left(\eta_tp-\frac{L+\lambda}{2}\eta_t^2p^2\right)G_t^2\\&+\sum_{t+1\not\equiv0\mod E}-\frac{\lambda}{2}\eta_t^2G_t^2-\lambda \sqrt{m}\left(\sqrt{d}\sigma+\sqrt{\frac{2}{\lambda}B_{t+1}}\right)\eta_tG_t-\frac{\lambda m}{2}\left(d\sigma^2+2\sigma\sqrt{\frac{2d}{\lambda}B_{t+1}}\right)
    \end{aligned}\\
    &\leq \sum_{k=1}^mf_k(w_k^0;\widetilde{w}^0)-f_k^*,
\end{align}
where $G_t=\sqrt{\sum_{k=1}^m\|\nabla f_k(w_k^{t};\widetilde{w}^t)\|^2}$.
Picking $\eta_t=\frac{p}{p^2L+(p^2+1)\lambda}$ yields
\begin{align}
    &\sum_{t+1\equiv0\mod E}\frac{p^2}{2(p^2L+(p^2+1)\lambda)}G_t^2-\frac{\lambda \sqrt{m}\left(\sqrt{d}\sigma+\sqrt{\frac{2}{\lambda}B_{t+1}}\right)p}{p^2L+(p^2+1)\lambda}G_t-\frac{\lambda m}{2}\left(d\sigma^2+2\sigma\sqrt{\frac{2d}{\lambda}B_{t+1}}\right)\\&+\sum_{t+1\not\equiv 0\mod E}\frac{p^2(p^2L+(p^2+2)\lambda)}{2(p^2L+(p^2+1)\lambda)^2}G_t^2\\&\leq \sum_{k=1}^mf_k(w_k^0;\widetilde{w}^0)-f_k^*.
\end{align}
% Therefore,
% \begin{align}
%     \sum_{t=0}^{T-1}\frac{p^2}{2(p^2L+(p^2+1)\lambda)}G_t^2-\frac{\lambda \sqrt{m}\left(\sqrt{d}\sigma+\sqrt{\frac{2}{\lambda}B_{t+1}}\right)p}{p^2L+(p^2+1)\lambda}G_t-\frac{\lambda m}{2}\left(d\sigma^2+2\sigma\sqrt{\frac{2d}{\lambda}B_{t+1}}\right)\leq \sum_{k=1}^mf_k(w_k^0;\widetilde{w}^0)-f_k^*.
% \end{align}
This is equivalent to
\begin{align}
     &\sum_{t=0}^{T-1}G_t^2+\sum_{t+1\equiv0\mod E}-2\lambda \sqrt{m}\left(\sqrt{d}\sigma+\sqrt{\frac{2}{\lambda}B_{t+1}}\right)\frac{m}{q}G_t-\left(L+\lambda+\frac{\lambda}{p^2}\right)\lambda m\left(d\sigma^2+2\sigma\sqrt{\frac{2d}{\lambda}B_{t+1}}\right)
     \\&\leq \left(2\frac{1}{E}\left(L+\lambda+\frac{1}{p^2}\lambda\right)+2\frac{E-1}{E}\frac{\left(L+\lambda+\frac{1}{p^2}\lambda\right)^2}{L+\lambda+\frac{2}{p^2}\lambda}\right)\sum_{k=1}^mf_k(w_k^0;\widetilde{w}^0)-f_k^*
     \\&=\left(2\left(L+\lambda+\frac{1}{p^2}\lambda\right)+\frac{\frac{1}{p^2}\lambda}{E\left(L+\lambda+\frac{2}{p^2}\lambda\right)}\right)\sum_{k=1}^mf_k(w_k^0;\widetilde{w}^0)-f_k^*.
\end{align}
Hence, we have
\begin{align}
    &\sum_{t+1\equiv0\mod E}\left(G_t-\lambda m\left(\sqrt{d}\sigma+\sqrt{\frac{2}{\lambda}B_{t+1}}\right)\frac{1}{p}\right)^2+\sum_{t+1\not\equiv0\mod E}G_t^2\\ &\leq \left(2\left(L+\lambda+\frac{1}{p^2}\lambda\right)+\frac{\frac{1}{p^2}\lambda}{E\left(L+\lambda+\frac{2}{p^2}\lambda\right)}\right)\sum_{k=1}^mf_k(w_k^0;\widetilde{w}^0)-f_k^* \\&+ \sum_{t+1\equiv0\mod E}(L\lambda m+m\lambda^2)\left(d\sigma^2+2\sigma\sqrt{\frac{2d}{\lambda}B_{t+1}}\right)+2\frac{m\lambda B_{t+1}}{p^2}.
\end{align}
This implies
\begin{align}
    \sum_{t=0}^{T-1}G_t^2 &\leq 2\left(2\left(L+\lambda+\frac{1}{p^2}\lambda\right)+\frac{\frac{1}{p^2}\lambda}{E\left(L+\lambda+\frac{2}{p^2}\lambda\right)}\right)\sum_{k=1}^mf_k(w_k^0;\widetilde{w}^0)-f_k^* \\&+ 2\left(\sum_{t+1\equiv0\mod E}(L\lambda m+m\lambda^2+\frac{2\lambda^2m}{p^2})\left(d\sigma^2+2\sigma\sqrt{\frac{2d}{\lambda}B_{t+1}}\right)+4\frac{m\lambda B_{t+1}}{p^2}\right).
\end{align}
Hence, we conclude that
\begin{align}
    &\frac{1}{mT}\sum_{t=0}^{T-1}\sum_{k=1}^m\|\nabla f_k(w_k^{t};\widetilde{w}^t)\|^2\\
    &\leq
    \begin{aligned}
        &\frac{\left(4\left(L+\lambda+\frac{1}{p^2}\lambda\right)+\frac{2\lambda}{E\left(Lp^2+\lambda p^2+2\lambda\right)}\right)\sum_{k=1}^m(f_k(w_k^0;\widetilde{w}^0)-f_k^*)}{mT}\\&+\frac{\mathcal{O}\left(L\lambda +\lambda^2+\frac{\lambda^2}{p^2}\right)\sum_{i=1}^{T/E}\left(d\sigma^2+2\sigma\sqrt{\frac{2d}{\lambda}B_{iE}}+\frac{2B_{iE}}{\lambda}\right)}{T}
    \end{aligned}\\
    &\leq 
    \begin{aligned}
        &\frac{\left(4\left(L+\lambda+\frac{1}{p^2}\lambda\right)+\frac{2\lambda}{E\left(Lp^2+\lambda p^2+2\lambda\right)}\right)\sum_{k=1}^m(f_k(w_k^0;\widetilde{w}^0)-f_k^*)}{mT}\\&+\frac{\mathcal{O}\left(L +\lambda +\frac{\lambda }{p^2}\right)\sum_{i=1}^{T/E}\left(\sqrt{d\lambda}\sigma+\sqrt{2B_{iE}}\right)^2}{T}
    \end{aligned}\\
    &\leq \begin{aligned}\frac{\left(4\left(L+\lambda+\frac{1}{p^2}\lambda\right)+\frac{2\lambda}{E\left(Lp^2+\lambda p^2+2\lambda\right)}\right)\sum_{k=1}^m(f_k(w_k^0;\widetilde{w}^0)-f_k^*)}{mT}&+\frac{\mathcal{O}\left(L +\lambda +\frac{\lambda m^2}{q^2}\right)\sum_{i=1}^{T/E}B_{iE}}{T}\\&+\mathcal{O}\left(\frac{Ld\lambda +d\lambda^2+d\lambda^2/p^2}{E} \right)\sigma^2\end{aligned}.
\end{align}
Taking $\sigma=\frac{c_2\gamma\sqrt{T\log(1/\delta)}}{m\epsilon}$ and $T=\mathcal{O}\left(\frac{m}{\lambda d\gamma^2}\right)$, we have
\begin{align}
    \frac{1}{mT}\sum_{t=0}^{T-1}\sum_{k=1}^m\|\nabla f_k(w_k^{t};\widetilde{w}^t)\|^2
    &\leq \frac{\left(4\left(L+\lambda+\frac{1}{p^2}\lambda\right)+\frac{2\lambda}{E\left(Lp^2+\lambda p^2+2\lambda\right)}\right)\sum_{k=1}^m(f_k(w_k^0;\widetilde{w}^0)-f_k^*)}{mT}\\&+\frac{\mathcal{O}\left(L+\lambda+\frac{\lambda}{p^2} \right)\sum_{t=0}^{T-1}B_{t+1}}{T}+\frac{1}{E}\mathcal{O}\left(L+\lambda+\frac{\lambda}{p^2} \right)\frac{\log(1/\delta)}{\epsilon^2}.
\end{align}
\end{proof}

\subsection{Convergence Analysis (Convex):}
\label{appen:convex}

We first present the formal statement of Theorem 5.
\begin{theorem}
    \label{th:4_formal}
    Let $f_k$ be $(L+\lambda)$-smooth and $(\mu+\lambda)$-strongly convex. Assume $\gamma$ is sufficiently large such that $\gamma\geq\max_{k,t}\|\nabla_{w_k^t}f_k(w_k^t;\widetilde{w}^t)\|_2$. Further let $w_k^*=\argmin_{w}f_k(w;\bar{w}^*)$, where $\bar{w}^*=\frac{1}{m}\sum_{k=1}^mw_k^*$ and $p=\frac{q}{m}$. If we use a fixed learning rate $\eta_t=\eta=\frac{cp}{Lp^2+\lambda p^2-2\lambda}$ for some constant $c$ such that  $0\leq1-\eta p(c-2)(\mu+\lambda)\leq\frac{1}{2}$, Algorithm \ref{alg:1} satisfies:
    \begin{equation}
        \begin{aligned}
            \Delta_{T}\leq \frac{1}{2^T}\left(\Delta_0-m\lambda\left(d\sigma^2+2\sqrt{d}\sigma\sqrt{\frac{2}{\lambda}B}+\frac{1}{\lambda}B\right)\right)+\frac{m\lambda\left(d\sigma^2+2\sqrt{d}\sigma\sqrt{\frac{2}{\lambda}B}+\frac{1}{\lambda}B\right)}{1-\frac{1}{2^E}},
        \end{aligned}
    \end{equation}
    where $\Delta_t=\sum_{k=1}^mf_k(w_k^t;\widetilde{w}^t)-f_k(w_k^*;\widetilde{w}^*)$ and $B = \max_t \max_kf_k(w_k^t;\widetilde{w}^t)$.
    
    Let $\sigma$ be chosen as in Theorem \ref{th:2}, then there exists  $T=\mathcal{O}\left(\frac{m}{\lambda d\gamma^2}\right)$ such that
    \begin{equation}
        \begin{aligned}
            \frac{1}{m}\Delta_T\leq\frac{1}{2^T}\left(\frac{1}{m}\Delta_0-\frac{\log(1/\delta)}{m\epsilon^2}-\mathcal{O}\left(B\right)\right)+\frac{\frac{\log(1/\delta)}{m\epsilon^2}+\mathcal{O}\left(B\right)}{1-\frac{1}{2^E}}.
        \end{aligned}
    \end{equation}
\end{theorem}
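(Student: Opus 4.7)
The plan is to adapt the proof of Theorem \ref{th:3_formal} by replacing the gradient-norm summation with a one-step geometric contraction enabled by strong convexity. Concretely, since $f_k(\cdot;\widetilde{w}^t)$ is $(\mu+\lambda)$-strongly convex, the Polyak--{\L}ojasiewicz-type inequality $\|\nabla f_k(w_k^t;\widetilde{w}^t)\|^2 \geq 2(\mu+\lambda)\bigl(f_k(w_k^t;\widetilde{w}^t) - \min_w f_k(w;\widetilde{w}^t)\bigr)$ turns the smoothness-based descent inequality into a multiplicative decrease in the per-task suboptimality $\delta^k_t := f_k(w_k^t;\widetilde{w}^t) - f_k(w_k^*;\widetilde{w}^*)$, from which $\Delta_t = \sum_k \delta^k_t$.

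First I would handle the \emph{non-communication rounds} $t+1 \not\equiv 0 \pmod{E}$, in which $\widetilde{w}^{t+1} = \widetilde{w}^t$ and the change in $f_k$ is due solely to the local SGD step. Starting from the bound $\mathbb{E}[f_k(w_k^{t+1};\widetilde{w}^t) - f_k(w_k^t;\widetilde{w}^t)] \leq \bigl(\tfrac{L+\lambda}{2}\eta^2 p^2 - \eta p\bigr)\|\nabla f_k(w_k^t;\widetilde{w}^t)\|^2$ already derived in the proof of Theorem \ref{th:3_formal}, I would substitute the PL bound and collect terms. The prescribed choice $\eta = cp/(Lp^2+\lambda p^2 - 2\lambda)$ with $c$ satisfying $0 \leq 1-\eta p (c-2)(\mu+\lambda) \leq \tfrac{1}{2}$ is calibrated precisely to make the resulting contraction constant at most $\tfrac{1}{2}$, giving $\mathbb{E}[\delta^k_{t+1}] \leq \tfrac{1}{2}\delta^k_t$ on such rounds.

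Next I would handle the \emph{communication rounds} $t+1 \equiv 0 \pmod{E}$, at which the reference model moves and an extra term $\mathbb{E}[f_k(w_k^{t+1};\widetilde{w}^{t+1}) - f_k(w_k^{t+1};\widetilde{w}^t)]$ appears from the mean-regularizer shift. I would bound this exactly as in the bound on term $\mathrm{B}$ in Appendix \ref{appen:nonconv}: apply Cauchy--Schwarz to factor it as $\tfrac{\lambda}{2}\|\widetilde{w}^{t+1}-\widetilde{w}^t\|\cdot\|2w_k^{t+1} - \widetilde{w}^t - \widetilde{w}^{t+1}\|$, then invoke $\mathbb{E}[\|\widetilde{w}^{t+1}-\widetilde{w}^t\|^2] \leq (h(t) + \sqrt{d}\sigma)^2$ from the clipping-plus-Gaussian aggregation together with $\|w_k^{t+1}-\widetilde{w}^{t+1}\|^2 \leq \tfrac{2}{\lambda} f_k(w_k^{t+1};\widetilde{w}^{t+1}) \leq \tfrac{2}{\lambda}B$. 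Since $\gamma$ is assumed large enough that no clipping triggers, $h(t)$ is small (or absorbable into the contraction slack), leaving an additive error per task per communication round bounded by $C = \mathcal{O}\bigl(\lambda(\sqrt{d}\sigma + \sqrt{2B/\lambda})^2\bigr)$, matching the theorem's definition.

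Summing over $k$ and combining, the recursion $\Delta_{t+1} \leq \tfrac{1}{2}\Delta_t + mC \cdot \mathbf{1}\{t+1 \equiv 0 \pmod{E}\}$ holds for every $t$. Unrolling over $T$ rounds and using $\sum_{i\geq 0} 2^{-iE} = 1/(1 - 2^{-E})$ yields $\tfrac{1}{m}\Delta_T \leq 2^{-T}(\tfrac{1}{m}\Delta_0 - C) + C/(1 - 2^{-E})$, as claimed. Substituting $\sigma$ from Theorem \ref{th:2} and choosing $T = \mathcal{O}(m/(\lambda d \gamma^2))$ then converts the $d\sigma^2$ term into $\log(1/\delta)/(m\epsilon^2)$, giving the final form. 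The main obstacle will be the communication-round step: strong convexity of $f_k(\cdot;\widetilde{w}^t)$ only controls suboptimality relative to the time-varying minimizer $\min_w f_k(w;\widetilde{w}^t)$, not to the fixed reference $f_k(w_k^*;\widetilde{w}^*)$, so this mismatch must be carefully absorbed into the $B$-dependent error term rather than degrading the $\tfrac{1}{2}$ contraction inherited from the non-communication rounds, and the bookkeeping that carries $C$ cleanly across each communication boundary without breaking the geometric-series structure is the delicate part.
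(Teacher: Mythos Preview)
Your plan is essentially the paper's own argument: smoothness descent plus the PL inequality to get a per-step multiplicative contraction, the identical Cauchy--Schwarz/$C_1$/$C_2$ decomposition of the regularizer shift $\mathbb{E}[f_k(w_k^{t+1};\widetilde w^{t+1})-f_k(w_k^{t+1};\widetilde w^t)]$ on communication rounds, and a geometric unroll.

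One point should be sharpened. You say $h(t)$ is ``small (or absorbable into the contraction slack)'' and attribute the calibration of $\eta$ to the \emph{non}-communication rounds. In fact $h(t)^2=\tfrac{\eta^2}{m}\sum_k\|\nabla f_k(w_k^t;\widetilde w^t)\|^2$ is not small a priori; the paper handles it by folding $m\lambda h(t)^2=\lambda\eta^2\sum_k\|\nabla f_k\|^2$ back into the gradient-norm term already present in the descent inequality \emph{before} applying PL, so that on communication rounds the effective coefficient of $\eta^2$ becomes $Lp^2+\lambda p^2-2\lambda$ rather than $(L+\lambda)p^2$. This is exactly why the stated learning rate has $Lp^2+\lambda p^2-2\lambda$ in the denominator and why the contraction condition reads $1-\eta p(c-2)(\mu+\lambda)\le\tfrac12$: the factor $c-2$ emerges from $(Lp^2+\lambda p^2-2\lambda)\eta-2p=(c-2)p$ only after this absorption. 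On non-communication rounds the contraction is actually tighter, and the paper simply relaxes it to the same bound so a single recursion $\Delta_{t+1}-D\le\tfrac12(\Delta_t-D)$ holds uniformly; unrolling and bounding $D$ from both sides gives the two displayed constants. Apart from this bookkeeping, your outline matches the paper, including the reference-point mismatch you correctly flag at the end.
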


\begin{proof}[Proof for Theorem \ref{th:4_formal}]
Let $w_k^*=\argmin_wf_k(w;\bar{w}^*)$. Let $I_k^t$ be the random variable indicating whether task $k$ is selected in communication round $t$. Thus $\mathbb{E}[I_k^t]=p_k$. By $L+\lambda$-smoothness and $\mu+\lambda$-strong convexity of $f_k$, we have
\begin{align}
    \mathbb{E}[f_k(w_k^{t+1};\widetilde{w}^t)-f_k(w_k^{t};\widetilde{w}^t)] &\leq\mathbb{E}\left[\langle\nabla f_k(w_k^t;\widetilde{w}^t),w_k^{t+1}-w_k^t\rangle+\frac{L}{2}\|w_k^{t+1}-w_k^t\|^2\right]\\
    &=\mathbb{E}\left[\langle\nabla f_k(w_k^t;\widetilde{w}^t),\eta_tI_k^t\nabla f_k(w_k^t;\widetilde{w}^t)\rangle+\frac{L}{2}\|\eta_tI_k^t\nabla f_k(w_k^t;\widetilde{w}^t)\|^2\right]\\
    &=\left(\frac{L+\lambda}{2}\eta_t^2p_k^2-\eta_tp_k\right)\|\nabla f_k(w_k^{t};\widetilde{w}^t)\|^2\\
    &\leq \left(\frac{L+\lambda}{2}\eta_t^2p_k^2-\eta_tp_k\right)2(\mu+\lambda)(f(w_k^t;\widetilde{w}^t)-f(w_k^*;\widetilde{w}^t))\\
    &\leq \left(\frac{L+\lambda}{2}\eta_t^2p_k^2-\eta_tp_k\right)2(\mu+\lambda)(f(w_k^t;\widetilde{w}^t)-f(w_k^*;\widetilde{w}^*)).
\end{align}

In the case where $t+1\not\equiv 0\mod E$, i.e. $t+1$ is not a communication round, $\widetilde{w}^{t+1}=\widetilde{w}^t$. Therefore, we have 
\begin{align}
    \mathbb{E}[f_k(w_k^{t+1};\widetilde{w}^{t+1})-f_k(w_k^{t};\widetilde{w}^t)] \leq \left(\frac{L+\lambda}{2}\eta_t^2p_k^2-\eta_tp_k\right)2(\mu+\lambda)(f(w_k^t;\widetilde{w}^t)-f(w_k^*;\widetilde{w}^*)).
\end{align}

In the case where $t+1\equiv 0\mod E$, we have 
\begin{equation}
\begin{aligned}
    \mathbb{E}[f_k(w_k^{t+1};\widetilde{w}^{t+1})-f_k(w_k^{t};\widetilde{w}^t)] \leq &\underbrace{\mathbb{E}[f_k(w_k^{t+1};\widetilde{w}^{t+1})-f_k(w_k^{t+1};\widetilde{w}^t)]}_{\text{B}}\\&+\left((L+\lambda)\eta_t^2p_k^2-2\eta_tp_k\right)(\mu+\lambda)(f(w_k^t;\widetilde{w}^t)-f_k^*).
\end{aligned}
\end{equation}
% and
% \begin{align}
%     \mathbb{E}[f_k(w_k^{t+1};\widetilde{w}^{t+1})-f_k(w_k^{*};\bar{w}^*)]
%     \leq \|(L+\lambda)(w_k^{t}-w_k^{t+1})\|\|w_k^{t}-w_k^*\|+\mathbb{E}[f_k(w_k^{t+1};\widetilde{w}^{t+1})-f_k(w_k^{t+1};\widetilde{w}^t)]+\gamma_1.
% \end{align}
It suffices to bound $\text{B}$:
\begin{align}
    \text{B} &=\mathbb{E}\Bigg[\frac{\lambda}{2}\|w_k^{t+1}-\widetilde{w}^{t+1}\|^2-\frac{\lambda}{2}\|w_k^{t+1}-\widetilde{w}^{t}\|^2\Bigg]\\
    &= \frac{\lambda}{2}\mathbb{E}[\|\widetilde{w}^t-\widetilde{w}^{t+1}\|\|2w_k^{t+1}-\widetilde{w}^t-\widetilde{w}^{t+1}\|]\\
    &\leq\frac{\lambda}{2}\sqrt{\mathbb{E}[\|\widetilde{w}^t-\widetilde{w}^{t+1}\|^2]\mathbb{E}[\|2w_k^{t+1}-\widetilde{w}^t-\widetilde{w}^{t+1}\|^2]}\\
    &=\frac{\lambda}{2}\sqrt{\mathbb{E}[\|\widetilde{w}^t-\widetilde{w}^{t+1}\|^2]}\sqrt{\mathbb{E}[\|(\widetilde{w}^{t+1}-\widetilde{w}^t)+2(w_k^{t+1}-\widetilde{w}^{t+1})\|^2]}\\
    &\leq\frac{\lambda}{2}\sqrt{\mathbb{E}[\|\widetilde{w}^t-\widetilde{w}^{t+1}\|^2]}\sqrt{\mathbb{E}[\|\widetilde{w}^{t+1}-\widetilde{w}^t\|^2]+4\|w_k^{t+1}-\widetilde{w}^{t+1}\|^2+4\mathbb{E}[\|\widetilde{w}^{t+1}-\widetilde{w}^t\|\|w_k^{t+1}-\widetilde{w}^{t+1}\|]}\\
    &\leq\frac{\lambda}{2}\sqrt{\underbrace{\mathbb{E}[\|\widetilde{w}^t-\widetilde{w}^{t+1}\|^2]}_{\text{C}_1}}\sqrt{\mathbb{E}[\|\widetilde{w}^{t+1}-\widetilde{w}^t\|^2]+4\underbrace{\|w_k^{t+1}-\widetilde{w}^{t+1}\|^2}_{\text{C}_2}+4\sqrt{\mathbb{E}[\|\widetilde{w}^{t+1}-\widetilde{w}^t\|^2]\|w_k^{t+1}-\widetilde{w}^{t+1}\|^2}}
\end{align}
where the first and third inequality follows from Cauchy-Schwartz Inequality: $\mathbb{E}[XY]\leq\sqrt{\mathbb{E}[X^2]\mathbb{E}[Y^2]}$. It suffices to find the upper bound of $\text{C}_1$ and $\text{C}_2$.
\begin{align}
    \text{C}_1 &=\mathbb{E}\left[\left \|\frac{1}{q}\sum_{k\in S_t} g_k^{t+1}\min\bigg(1,\frac{\gamma}{\|g_k^{t+1}\|_2}\bigg)+\beta^t\right\|^2\right]\\
    &\leq \left(\sqrt{\mathbb{E}\left[\left\|\frac{1}{q}\sum_{k=1}^m I_k^{t+1}g_k^{t+1}\min\bigg(1,\frac{\gamma}{\|g_k^{t+1}\|_2}\bigg)\right\|^2\right]}+\sqrt{\mathbb{E}[\|\beta^t\|^2]}\right)^2\\
    &= \left(\sqrt{\mathbb{E}\left[\left\|\frac{1}{q}\sum_{k=1}^m I_k^{t+1}g_k^{t+1}\min\bigg(1,\frac{\gamma}{\|g_k^{t+1}\|_2}\bigg)\right\|^2\right]}+\sqrt{d}\sigma\right)^2\\
    % &= \mathbb{E}\Bigg[\bigg\|\frac{1}{m}\sum_{k=1}^m(w_k^t-w_k^{t+1})\bigg\|^2\Bigg]+2d\sigma^2+2\sqrt{2d}\sigma\sqrt{\mathbb{E}\Bigg[\bigg\|\frac{1}{m}\sum_{k=1}^m(w_k^t-w_k^{t+1})\bigg\|^2\Bigg]}\\
    &\leq \left(\sqrt{\frac{m}{q^2}\sum_{k=1}^m\mathbb{E}\left[\left\| I_k^{t+1}g_k^{t+1}\min\bigg(1,\frac{\gamma}{\|g_k^{t+1}\|_2}\bigg)\right\|^2\right]}+\sqrt{d}\sigma\right)^2\\
    &\leq \left(\sqrt{\frac{m}{q^2}\sum_{k=1}^m\mathbb{E}\left[\left\| I_k^{t+1}\eta_t\nabla f_k(w_k^t)\min\bigg(1,\frac{\gamma}{\eta_t\|\nabla f_k(w_k^t)\|_2}\bigg)\right\|^2\right]}+\sqrt{d}\sigma\right)^2\\
    &\leq \left(\sqrt{\frac{1}{m}\sum_{k=1}^m\left\| \eta_t\nabla f_k(w_k^t)\min\bigg(1,\frac{\gamma}{\eta_t\|\nabla f_k(w_k^t)\|_2}\bigg)\right\|^2}+\sqrt{d}\sigma\right)^2.
    % &\leq \gamma^2+d\sigma^2+2\sqrt{d}\sigma(\gamma)\\
    % &=(\gamma+\sqrt{d}\sigma)^2.
\end{align}
Denote $h(t)=\sqrt{\frac{1}{m}\sum_{k=1}^m\left\| \eta_t\nabla f_k(w_k^t)\min\bigg(1,\frac{\gamma}{\eta_t\|\nabla f_k(w_k^t)\|_2}\bigg)\right\|^2}$. On the other hand,
\begin{align}
    \text{C}_2 &\leq \frac{2}{\lambda}\frac{\lambda}{2}\|w_k^{t+1}-\widetilde{w}^{t+1}\|^2\\
    &\leq \frac{2}{\lambda}f_k(w_k^{t+1};\widetilde{w}^{t+1})\\
    &=\frac{2}{\lambda}B_{t+1}.
\end{align}
Plug the bounds for $\text{C}_1$ and $\text{C}_2$ into $\text{B}$:
\begin{align}
    \text{B} &\leq \frac{\lambda}{2}(h(t)+\sqrt{d}\sigma)\left(h(t)+\sqrt{d}\sigma+2\sqrt{\frac{2}{\lambda}B_{t+1}}\right)\\
    &\leq \lambda\left(h^2(t)+d\sigma^2+2\sqrt{d}\sigma\sqrt{\frac{2}{\lambda}B_{t+1}}+\frac{1}{\lambda}B_{t+1}\right)
\end{align}
Denoting the right hand side as $\beta(t)$, we have 
\begin{align}
    \mathbb{E}[f_k(w_k^{t+1};\widetilde{w}^{t+1})-f_k(w_k^{t};\widetilde{w}^t)] \leq \beta(t)+\left((L+\lambda)\eta_t^2p_k^2-2\eta_tp_k\right)(\mu+\lambda)(f(w_k^t;\widetilde{w}^t)-f_k^*).
\end{align}
% and
% \begin{align}
%     \mathbb{E}[f_k(w_k^{t+1};\widetilde{w}^{t+1})-f_k(w_k^{*};\bar{w}^*)]
%     \leq \|(L+\lambda)(w_k^{t}-w_k^{t+1})\|\|w_k^{t}-w_k^*\|+\gamma_2+\gamma_1.
% \end{align}
Letting $\delta_k^t = \mathbb{E}[f_k(w_k^{t};\widetilde{w}^{t})-f_k(w_k^{*};\bar{w}^*)]$, we have
\begin{align}
    \delta_k^{t+1}\leq\left(1-\left((L+\lambda)\eta_t^2p_k^2-2\eta_tp_k\right)(\mu+\lambda)\right)\delta_k^t+\beta(t)
\end{align}

Summing over $k$ on the left handed side, when $\gamma$ is large enough so that no clipping happens we have
\begin{align}
    \sum_{k=1}^m \delta_k^{t+1}
    &\leq\left(1-\left((L+\lambda)\eta_t^2p^2-2\eta_tp\right)(\mu+\lambda)\right)\sum_{k=1}^m\delta_k^t+m\beta(t)\\
    &= \left(1-\left((Lp^2+\lambda p^2-2\lambda)\eta_t^2-2\eta_tp\right)(\mu+\lambda)\right)\sum_{k=1}^m\delta_k^t+m\lambda\left(d\sigma^2+2\sqrt{d}\sigma\sqrt{\frac{2}{\lambda}B_{t+1}}+\frac{1}{\lambda}B_{t+1}\right).
\end{align}
Let $\Delta_t=\sum_{k=1}^m\delta_k^t$. Assume $\max_{t\leq T}B_t=B$.
Pick $C=\frac{m\lambda\left(d\sigma^2+2\sqrt{d}\sigma\sqrt{\frac{2}{\lambda}B}+\frac{1}{\lambda}B\right)}{\left((Lp^2+\lambda p^2-2\lambda)\eta_t^2-2\eta_tp\right)(\mu+\lambda)}$, we have
\begin{align}
    \Delta_{t+1}-C\leq \left(1-\left((Lp^2+\lambda p^2-2\lambda)\eta_t^2-2\eta_tp\right)(\mu+\lambda)\right)(\Delta_t-C).
\end{align}
Note that in the case where $t+1\not\equiv 0\mod E$, we have 
\begin{align}
    \Delta_{t+1}&\leq \left(1-\left((Lp^2+\lambda p^2)\eta_t^2-2\eta_tp\right)(\mu+\lambda)\right)\Delta_t\\
    &\leq \left(1-\left((Lp^2+\lambda p^2-2\lambda)\eta_t^2-2\eta_tp\right)(\mu+\lambda)\right)\Delta_t.
\end{align}
Choose $\eta_t=\eta=\frac{cp}{Lp^2+\lambda p^2-2\lambda}$ for some constant $c$ such that $0<\left(1-\left((Lp^2+\lambda p^2-2\lambda)\eta_t^2-2\eta_tp\right)(\mu+\lambda)\right)<\frac{1}{2}$. We have 
\begin{align}
    \Delta_{t+1}-C &\leq \left(1-\frac{(c^2-2c)(\mu+\lambda)}{L+\lambda-\frac{2\lambda}{p^2}}\right)(\Delta_t-C)\\
    &\leq  \left(1-\frac{(c^2-2c)(\mu+\lambda)}{L+\lambda-\frac{2\lambda}{p^2}}\right)\left(\left(1-\frac{(c^2-2c)(\mu+\lambda)}{L+\lambda-\frac{2\lambda}{p^2}}\right)^{E-1}\Delta_{t-E+1}-C\right).
\end{align}
This is equivalent to
\begin{align}
    \Delta_{t+1}-D\leq \left(1-\frac{(c^2-2c)(\mu+\lambda)}{L+\lambda-\frac{2\lambda}{p^2}}\right)^E(\Delta_{t-E+1}-D)
\end{align} where
\begin{align}
    D&=\frac{\frac{(c^2-2c)(\mu+\lambda)}{L+\lambda-\frac{2\lambda}{p^2}}}{1-\left(1-\frac{(c^2-2c)(\mu+\lambda)}{L+\lambda-\frac{2\lambda}{p^2}}\right)^E}C\\
    &=\frac{m\lambda\left(d\sigma^2+2\sqrt{d}\sigma\sqrt{\frac{2}{\lambda}B}+\frac{1}{\lambda}B\right)}{1-\left(1-\frac{(c^2-2c)(\mu+\lambda)}{L+\lambda-\frac{2\lambda}{p^2}}\right)^E}\\
    &\in\left(m\lambda\left(d\sigma^2+2\sqrt{d}\sigma\sqrt{\frac{2}{\lambda}B}+\frac{1}{\lambda}B\right),\frac{m\lambda\left(d\sigma^2+2\sqrt{d}\sigma\sqrt{\frac{2}{\lambda}B}+\frac{1}{\lambda}B\right)}{1-\frac{1}{2^E}}\right].
\end{align}

Apply recursively to all $t$, we obtain
\begin{align}
     \Delta_{T}&\leq \left(1-\frac{(c^2-2c)(\mu+\lambda)}{L+\lambda-\frac{2\lambda}{p^2}}\right)^T\left(\Delta_0-D\right)+D\\
     &\leq \frac{1}{2^T}\left(\Delta_0-m\lambda\left(d\sigma^2+2\sqrt{d}\sigma\sqrt{\frac{2}{\lambda}B}+\frac{1}{\lambda}B\right)\right)+\frac{m\lambda\left(d\sigma^2+2\sqrt{d}\sigma\sqrt{\frac{2}{\lambda}B}+\frac{1}{\lambda}B\right)}{1-\frac{1}{2^E}}.
\end{align}

Take $\sigma=\frac{c_2\gamma\sqrt{T\log(1/\delta)}}{m\epsilon}$ and we can find $T=\mathcal{O}\left(\frac{m}{\lambda d\gamma^2}\right)$ such that,
\begin{align}
    \Delta_T\leq \frac{1}{2^T}\left(\Delta_0-\frac{\log(1/\delta)}{\epsilon^2}-\mathcal{O}\left(mB\right)\right)+\frac{\frac{\log(1/\delta)}{\epsilon^2}+\mathcal{O}\left(mB\right)}{1-\frac{1}{2^E}}.
\end{align}
Divide both side by $m$, we have
\begin{align}
    \frac{1}{m}\Delta_T\leq\frac{1}{2^T}\left(\frac{1}{m}\Delta_0-\frac{\log(1/\delta)}{m\epsilon^2}-\mathcal{O}\left(B\right)\right)+\frac{\frac{\log(1/\delta)}{m\epsilon^2}+\mathcal{O}\left(B\right)}{1-\frac{1}{2^E}}.
\end{align}
\end{proof}

In the non-private case, our Theorem \ref{th:3} could reduce to the following corollary, which is of independent interest.
\begin{corollary}
% \color{red}
When $\sigma=0$, Algorithm \ref{alg:1} with $(L+\lambda)$-smooth and $(\mu+\lambda)$-strongly convex $f_k$ satisfies
\begin{equation}
         \frac{1}{m}\Delta_T\leq \frac{1}{2^T}\left(\frac{1}{m}\Delta_0-B\right)+\frac{B}{1-\frac{1}{2^E}}.
        % \vspace{-0.2cm}
    \end{equation}
    
\end{corollary}

\subsection{Datasets and Models}
\label{appen:datasets}
We summarize the details of the datasets and models we used in our empirical study in Table \ref{table:datasets}. Our experiments include both convex (Logistic Regression) and non-convex (CNN) loss objectives on both text (StackOverflow) and image (CelebA and FEMNIST) datasets. We provide anonymized code in the supplementary material for reproducibility. Our code makes use of the FL implementation from the public repo of \citet{LaguelPMH21} and \citet{li2021ditto}.

\begin{table*}[t!]
	\caption{}
%  	\vspace{1em}
	\centering
	\label{table:datasets}
	\scalebox{0.8}{
	\begin{tabular}{l lll} 
	   \toprule[\heavyrulewidth]
	     
        \textbf{Dataset} & \textbf{Number of tasks} & \textbf{Model} & \textbf{Task Type}  \\
        \cmidrule(r){1-4}
         FEMNIST~\citep{cohen2017emnist,caldas2018leaf} & 205 & 4-layer CNN & 62-class image classification\\
         StackOverflow~\citep{tff} & 400 & Logistic Regression & 500-class tag prediction\\
         CelebA~\citep{liu2015deep,caldas2018leaf} & 515 & 4-layer CNN & Binary image classification\\
    \bottomrule[\heavyrulewidth]
	\end{tabular}}
%  \vspace{0.3in}
\end{table*}

\subsection{Hyperparameters}
\label{appen:hyperparameters}
Each fixed privacy parameter $\epsilon$ could be computed by different combinations of noise scale $\sigma$, clipping norm $\gamma$, number of communication rounds $T$, and subsampling rate $p=\frac{q}{m}$. In all our experiments, we subsample 100 different tasks for each round, i.e. $q=100$, to perform local training as well as involved in global aggregation. For FEMNIST and CelebA, we choose $\sigma\in\{0.02,0.05,0.1\}$ and $\gamma\in\{0.2,0.5,1\}$. For StackOverflow, we choose $\sigma\in\{0.01,0.05,0.1\}$ and $\gamma\in\{0.1,0.5,1\}$. We summarize both utility and privacy performance for different hyperparameters below.
\begin{figure}[h!]
    \centering
    % \vspace{-0.01in}
    \begin{subfigure}{0.32\textwidth}
        \centering
        \includegraphics[width=0.99\textwidth,trim=10 10 10 30]{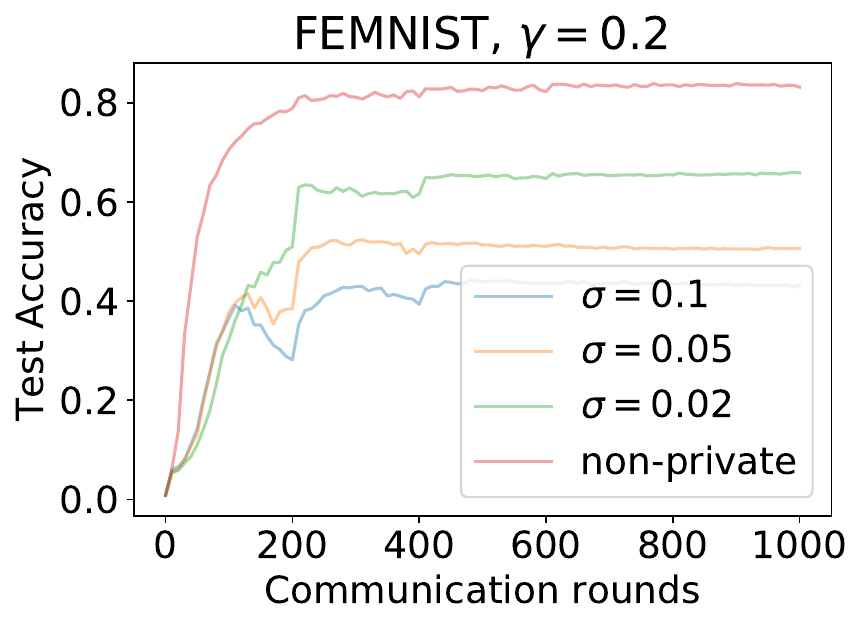}
        \vspace{0.1in}
        % \subcaption{StackOverflow tag prediction}
    \end{subfigure}
    \begin{subfigure}{0.32\textwidth}
        \centering
        \includegraphics[width=0.99\textwidth,trim=10 10 10 30]{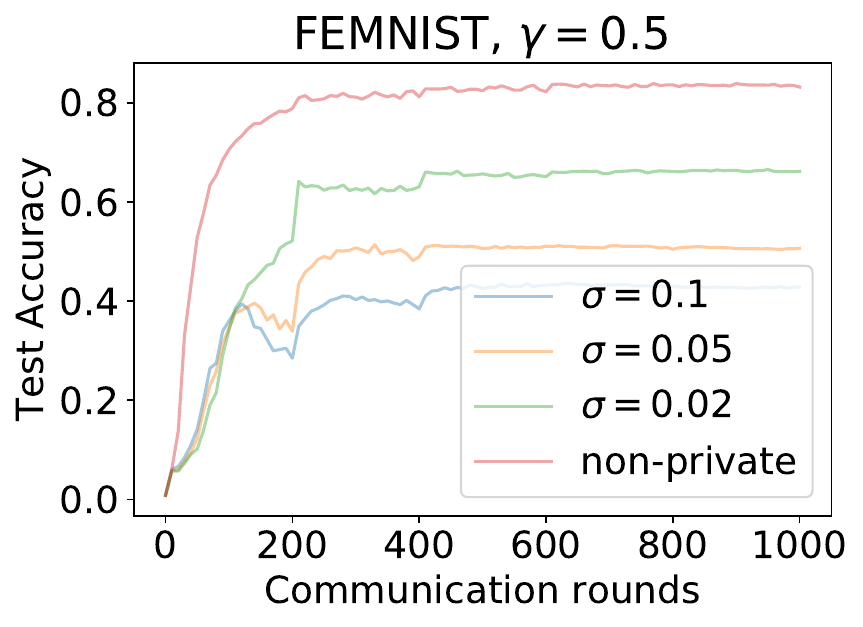}
        \vspace{0.1in}
        % \subcaption{FEMNIST}
    \end{subfigure}
    \begin{subfigure}{0.32\textwidth}
        \centering
        \includegraphics[width=0.99\textwidth,trim=10 10 10 30]{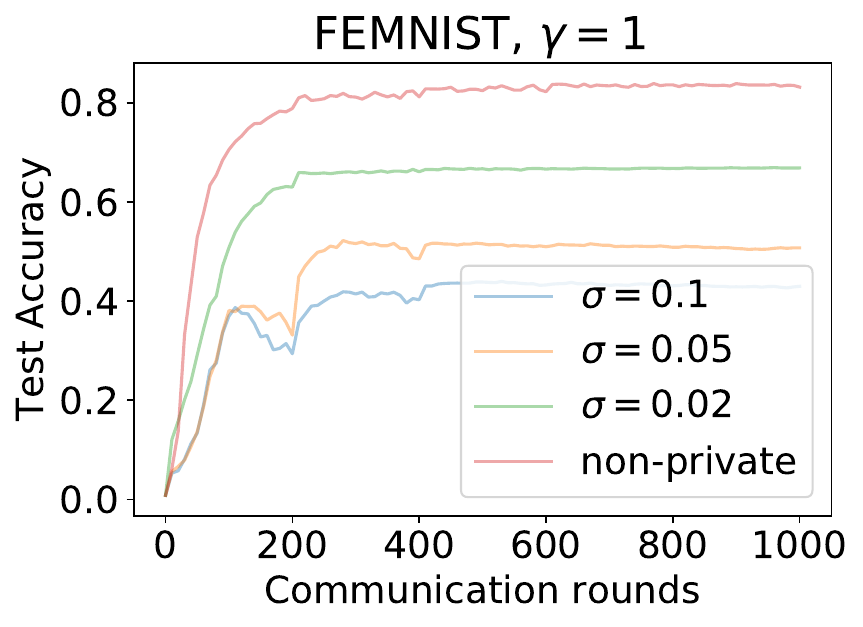}
        \vspace{0.1in}
        % \subcaption{CelebA}
    \end{subfigure}
    % \vspace{0.1in}
    \begin{subfigure}{0.32\textwidth}
        \centering
        \includegraphics[width=0.99\textwidth,trim=10 10 10 30]{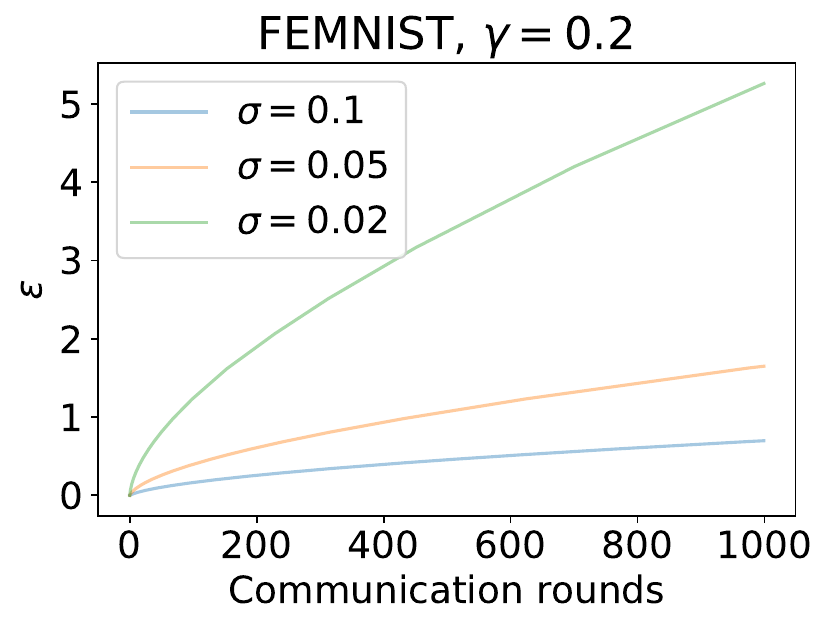}
        % \subcaption{StackOverflow tag prediction}
    \end{subfigure}
    \begin{subfigure}{0.32\textwidth}
        \centering
        \includegraphics[width=0.99\textwidth,trim=10 10 10 30]{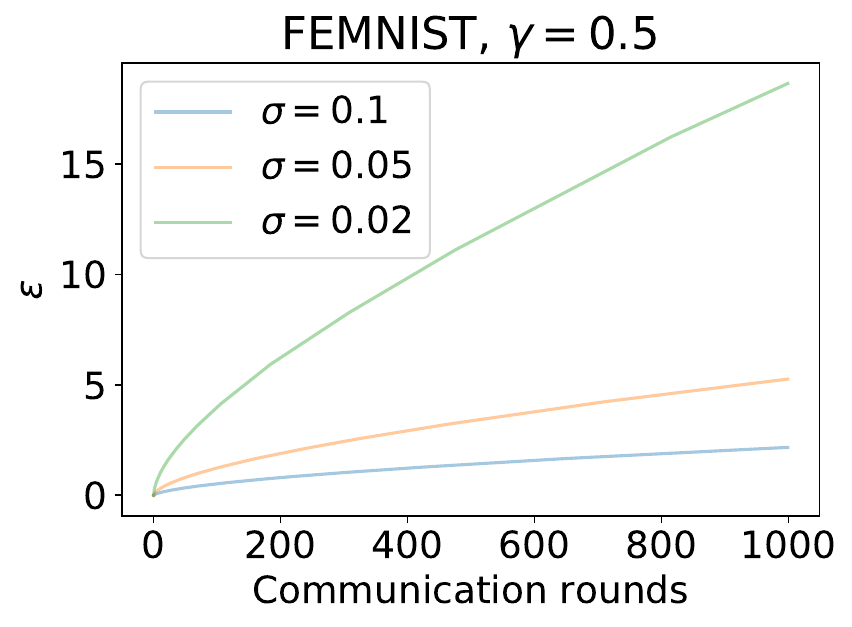}
        % \subcaption{FEMNIST}
    \end{subfigure}
    \begin{subfigure}{0.32\textwidth}
        \centering
        \includegraphics[width=0.99\textwidth,trim=10 10 10 30]{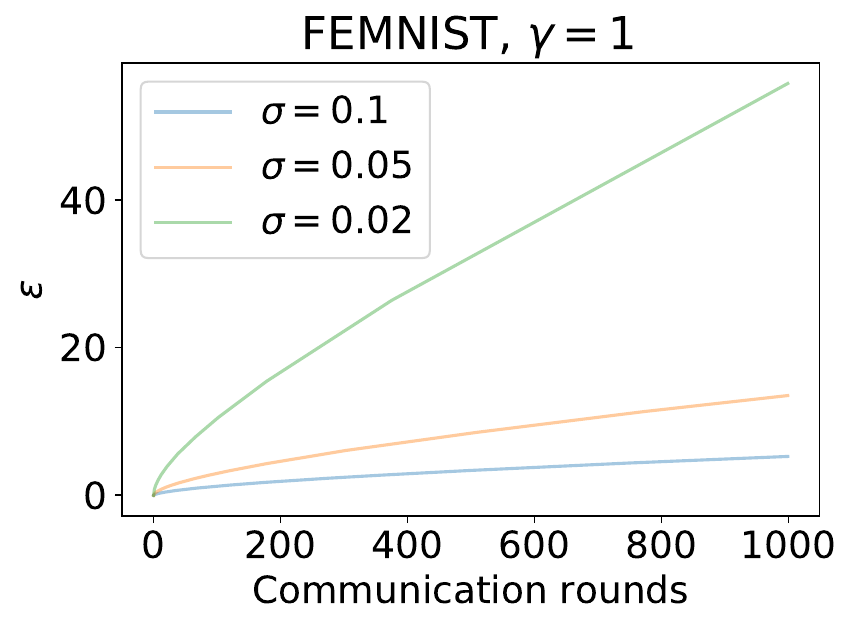}
        % \subcaption{CelebA}
    \end{subfigure}
    %\begin{subfigure}{0.8\textwidth}
    %\includegraphics[width=\textwidth, trim=10 10 10 10, clip]{figs/MTL_robustness_femnist_dynamic.pdf}
    %\end{subfigure}
    % \vspace{-0.09in}
    \caption{\small FEMNIST results} %
%   \vspace{-0.18in}
    \label{fig:femnist}
\end{figure}
\vspace{0.1in}
\begin{figure}[h!]
    \centering
    % \vspace{-0.01in}
    \begin{subfigure}{0.32\textwidth}
        \centering
        \includegraphics[width=0.99\textwidth,trim=10 10 10 30]{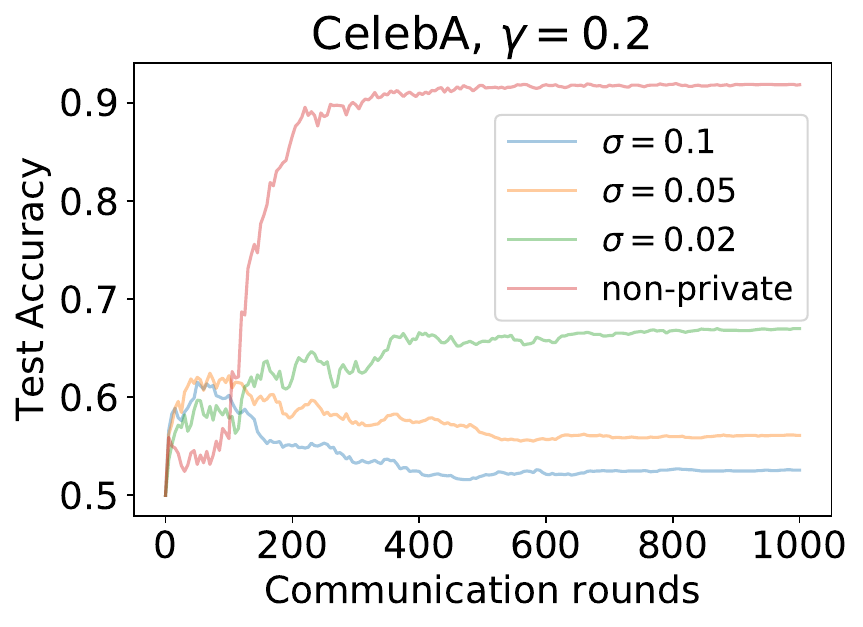}
        \vspace{0.1in}
        % \subcaption{StackOverflow tag prediction}
    \end{subfigure}
    \begin{subfigure}{0.32\textwidth}
        \centering
        \includegraphics[width=0.99\textwidth,trim=10 10 10 30]{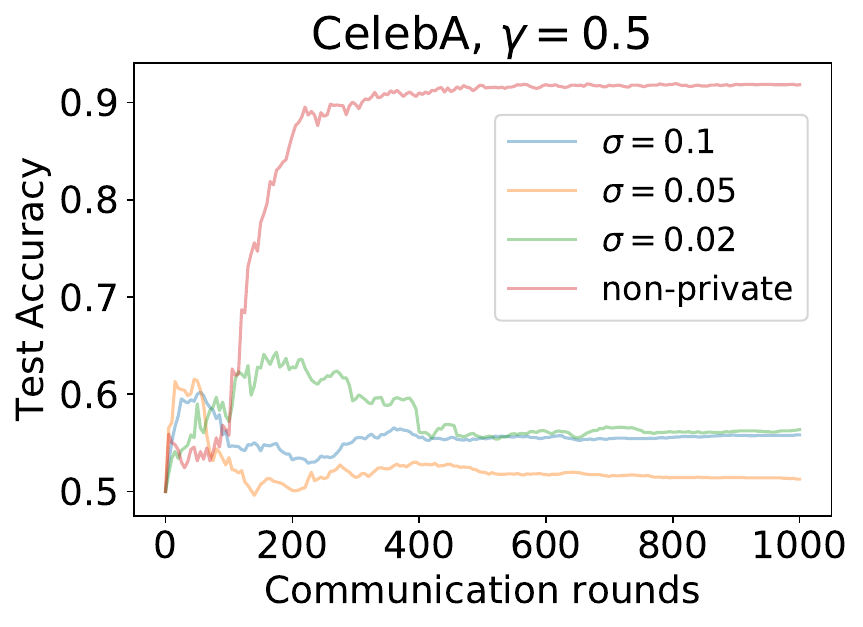}
        \vspace{0.1in}
        % \subcaption{FEMNIST}
    \end{subfigure}
    \begin{subfigure}{0.32\textwidth}
        \centering
        \includegraphics[width=0.99\textwidth,trim=10 10 10 30]{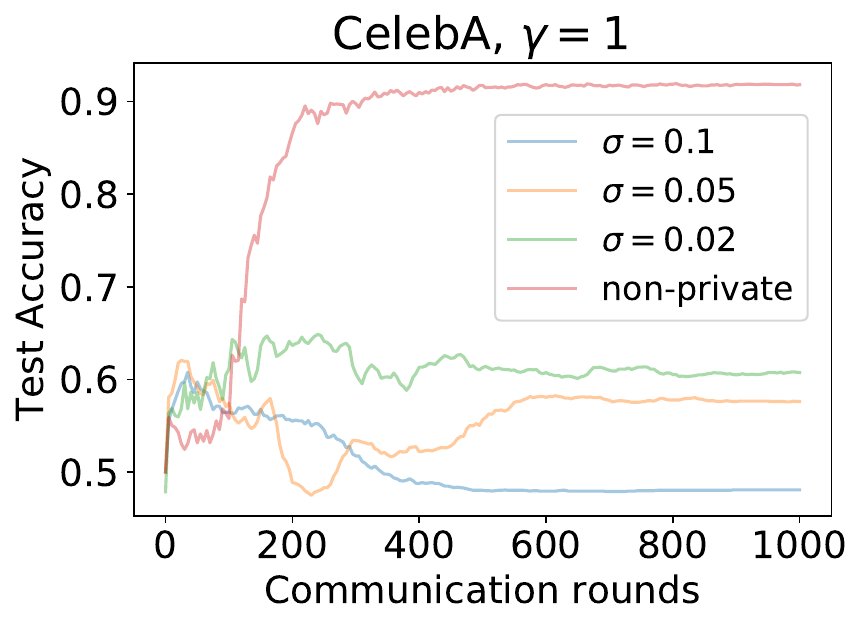}
        \vspace{0.1in}
        % \subcaption{CelebA}
    \end{subfigure}
    % \vspace{0.1in}
    \begin{subfigure}{0.32\textwidth}
        \centering
        \includegraphics[width=0.99\textwidth,trim=10 10 10 30]{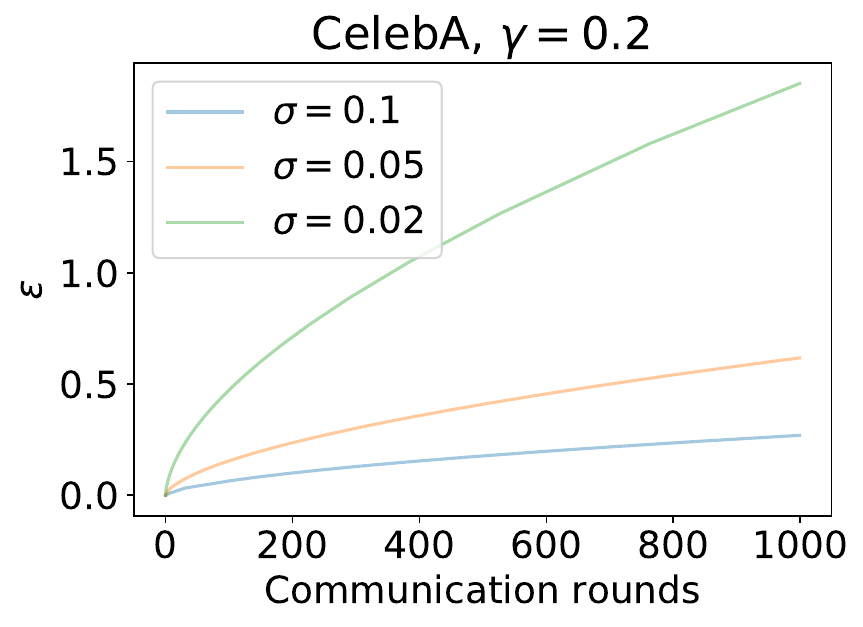}
        % \subcaption{StackOverflow tag prediction}
    \end{subfigure}
    \begin{subfigure}{0.32\textwidth}
        \centering
        \includegraphics[width=0.99\textwidth,trim=10 10 10 30]{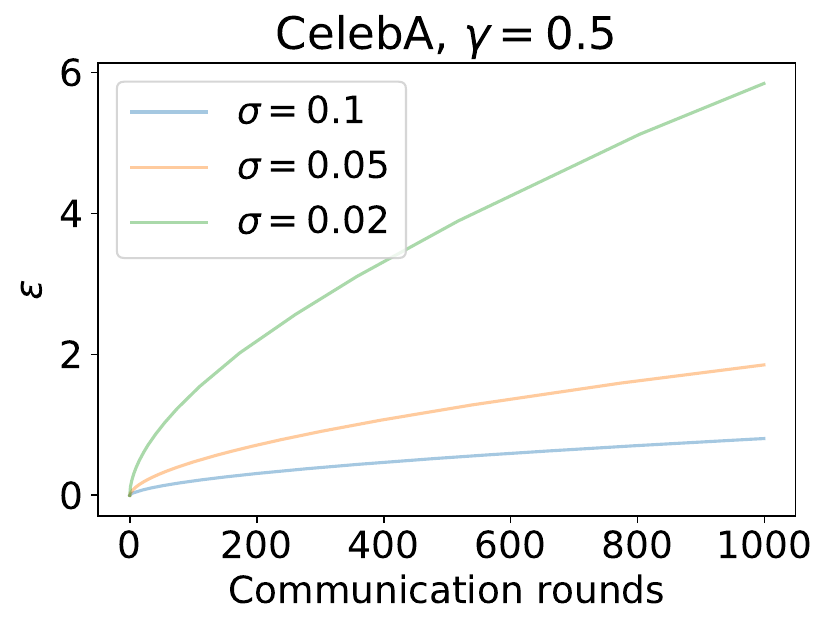}
        % \subcaption{FEMNIST}
    \end{subfigure}
    \begin{subfigure}{0.32\textwidth}
        \centering
        \includegraphics[width=0.99\textwidth,trim=10 10 10 30]{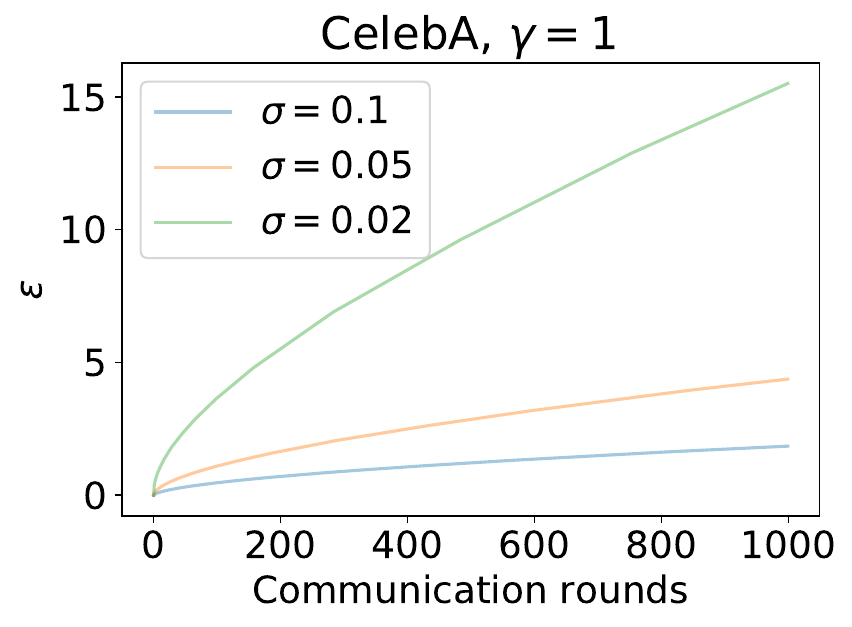}
        % \subcaption{CelebA}
    \end{subfigure}
    %\begin{subfigure}{0.8\textwidth}
    %\includegraphics[width=\textwidth, trim=10 10 10 10, clip]{figs/MTL_robustness_femnist_dynamic.pdf}
    %\end{subfigure}
    % \vspace{-0.09in}
    \caption{\small CelebA results} %
%   \vspace{-0.18in}
    \label{fig:Celeba}
\end{figure}
% \vspace{0.2in}
\begin{figure}[h!]
    \centering
    % \vspace{-0.01in}
    \begin{subfigure}{0.32\textwidth}
        \centering
        \includegraphics[width=0.99\textwidth,trim=10 10 10 30]{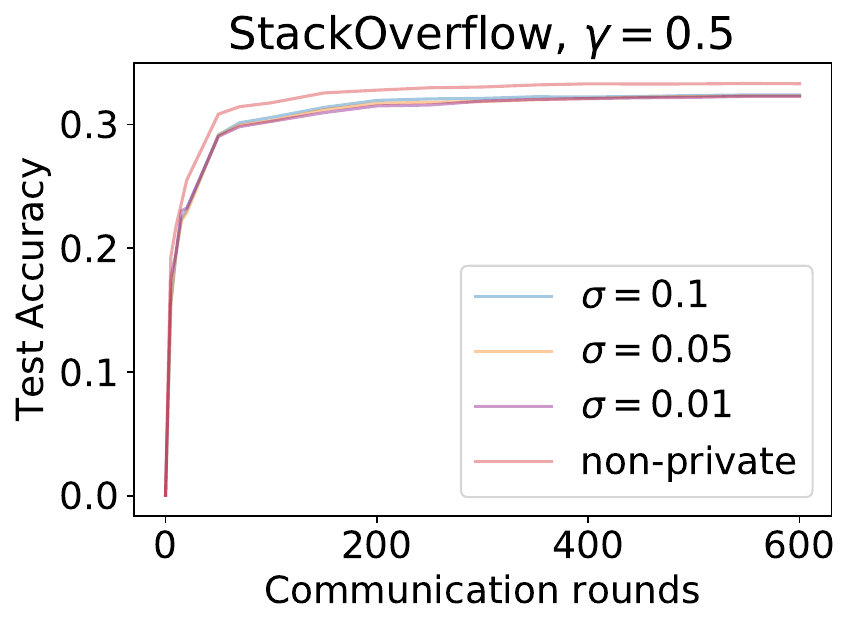}
        \vspace{0.1in}
        % \subcaption{StackOverflow tag prediction}
    \end{subfigure}
    \begin{subfigure}{0.32\textwidth}
        \centering
        \includegraphics[width=0.99\textwidth,trim=10 10 10 30]{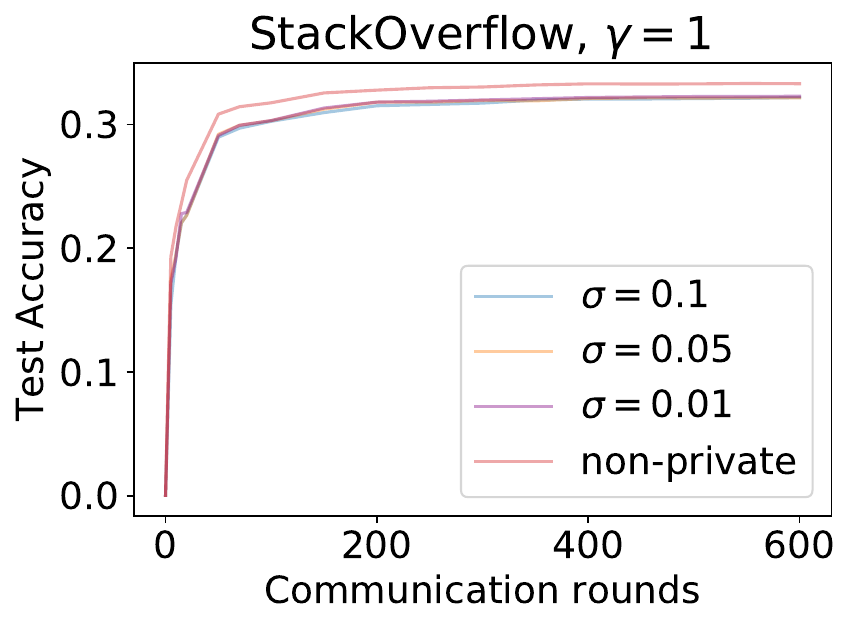}
        \vspace{0.1in}
        % \subcaption{FEMNIST}
    \end{subfigure}
    \begin{subfigure}{0.32\textwidth}
        \centering
        \includegraphics[width=0.99\textwidth,trim=10 10 10 30]{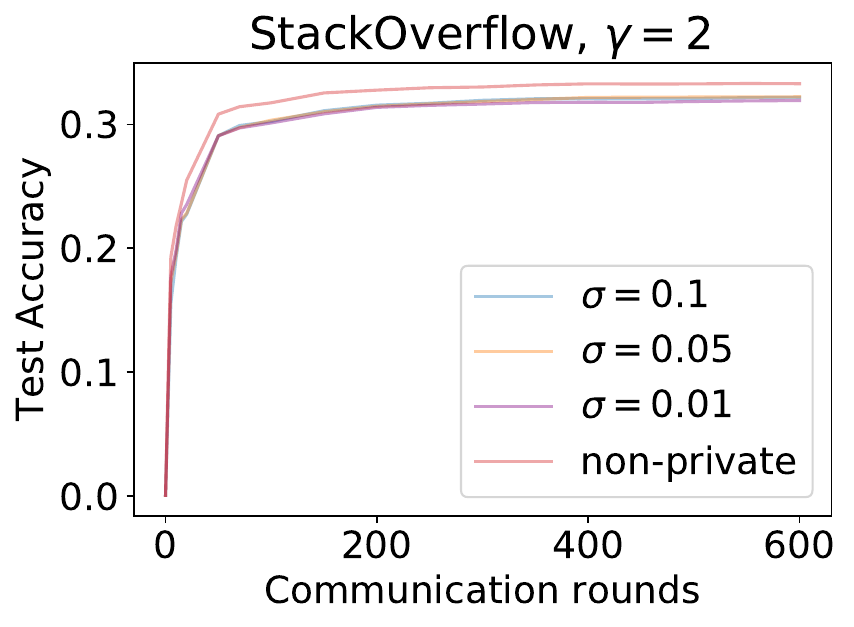}
        \vspace{0.1in}
        % \subcaption{CelebA}
    \end{subfigure}
    % \vspace{0.1in}
    \begin{subfigure}{0.32\textwidth}
        \centering
        \includegraphics[width=0.99\textwidth,trim=10 10 10 30]{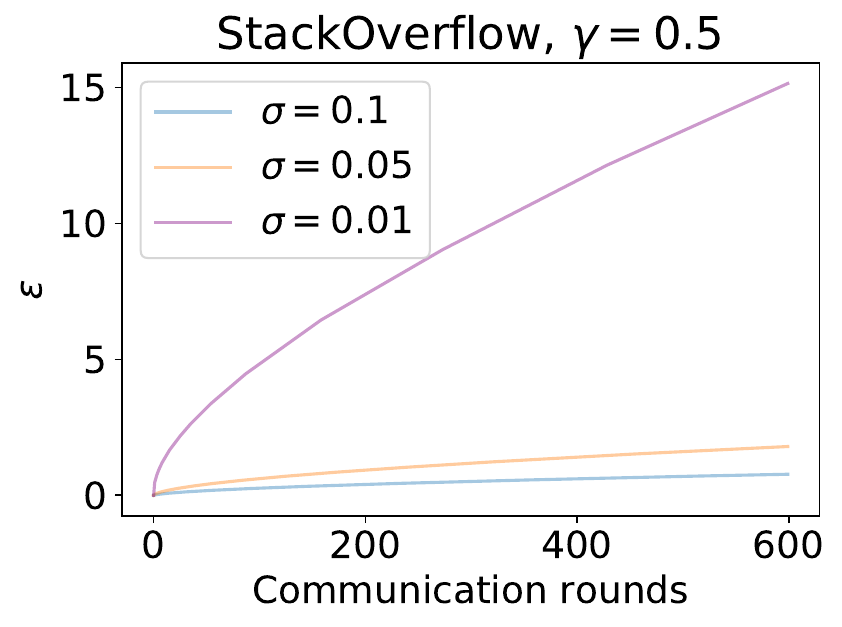}
        % \subcaption{StackOverflow tag prediction}
    \end{subfigure}
    \begin{subfigure}{0.32\textwidth}
        \centering
        \includegraphics[width=0.99\textwidth,trim=10 10 10 30]{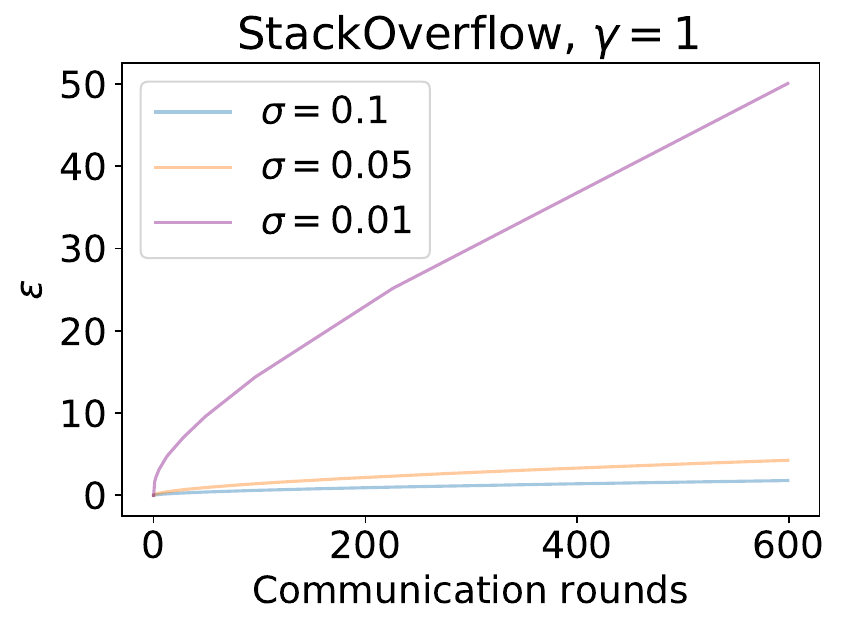}
        % \subcaption{FEMNIST}
    \end{subfigure}
    \begin{subfigure}{0.32\textwidth}
        \centering
        \includegraphics[width=0.99\textwidth,trim=10 10 10 30]{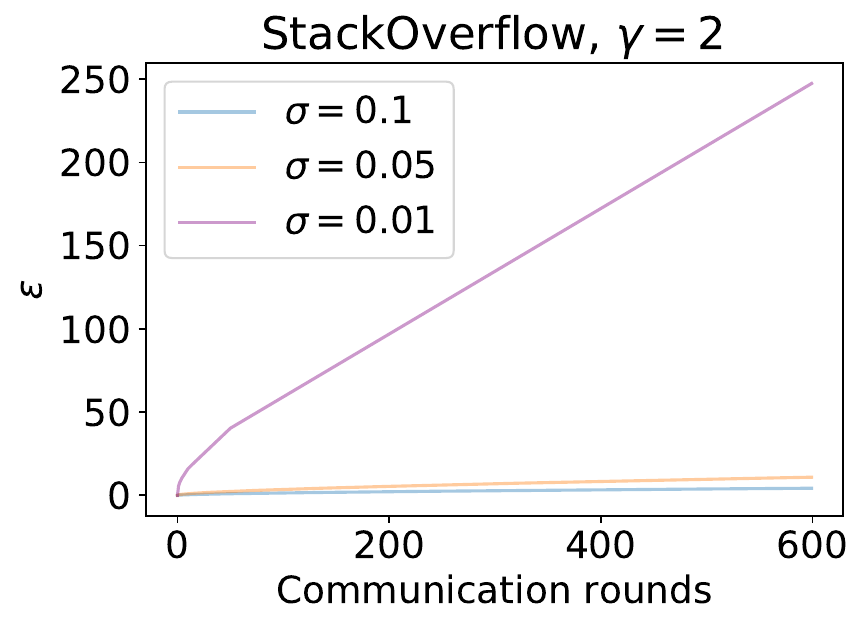}
        % \subcaption{CelebA}
    \end{subfigure}
    %\begin{subfigure}{0.8\textwidth}
    %\includegraphics[width=\textwidth, trim=10 10 10 10, clip]{figs/MTL_robustness_femnist_dynamic.pdf}
    %\end{subfigure}
    % \vspace{-0.09in}
    \caption{\small StackOverflow results} %
%   \vspace{-0.18in}
    \label{fig:So}
\end{figure}

\subsection{Comparison with FedProx}
\label{appen:fedprox}
Besides FedAvg, we also compared private mean-regularized MTL with other methods that aims to train a global model privately. In particular, we studied private FedProx~\citep{li2018federated} as an alternative global baseline. Note that although the local objective being solved in FedProx is similar to that in mean-regularized MTL, FedProx is a fundamentally different method to handle data heterogeneity in FL from MTL. Specifically, FedProx learns a \textit{global model} where each client solves an inexact minimizer by optimizing local empirical risk with a regularization term. We instead explore learning a \textit{multi-task objective} where each client solves a mean-regularized objective and learns a \textit{separate, client-specific model}. The results are shown in Figure \ref{fig:mtl_vs_fedprox}.  In all three datasets, private FedProx is very similar to private FedAvg under different private parameters $\epsilon$ and worse than private MTL. In particular, in FEMNIST and Stackoverflow, private MTL significantly outperforms training a private global model (FedAvg and FedProx), for all $\epsilon$’s.
\begin{figure}[h!]
    \centering
    \vspace{-0.01in}
    \begin{subfigure}{0.31\textwidth}
        \centering
        \includegraphics[width=0.99\textwidth,trim=10 10 10 30]{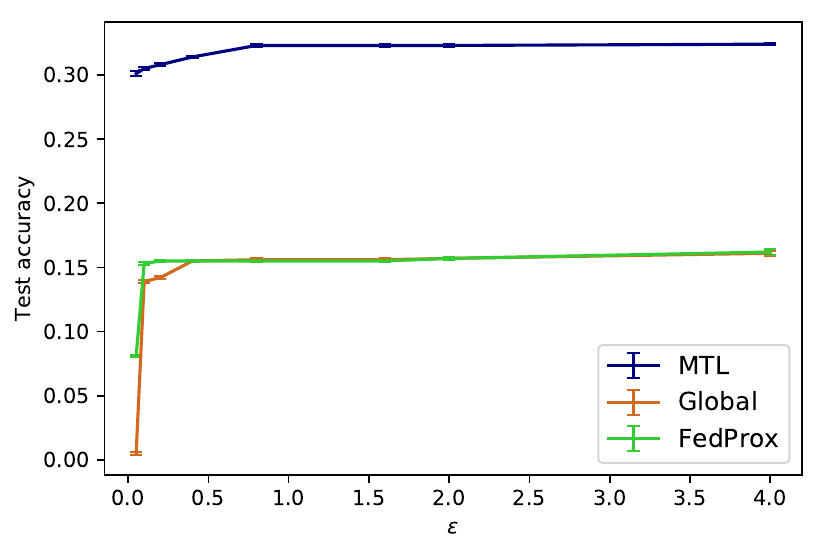}
        \subcaption{StackOverflow tag prediction}
    \end{subfigure}
    \begin{subfigure}{0.31\textwidth}
        \centering
        \includegraphics[width=0.99\textwidth,trim=10 10 10 30]{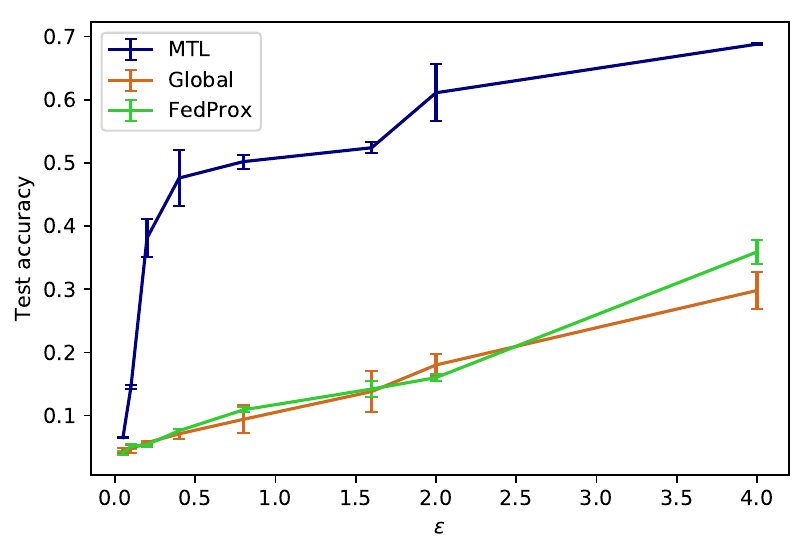}
        \subcaption{FEMNIST}
    \end{subfigure}
    \begin{subfigure}{0.31\textwidth}
        \centering
        \includegraphics[width=0.99\textwidth,trim=10 10 10 30]{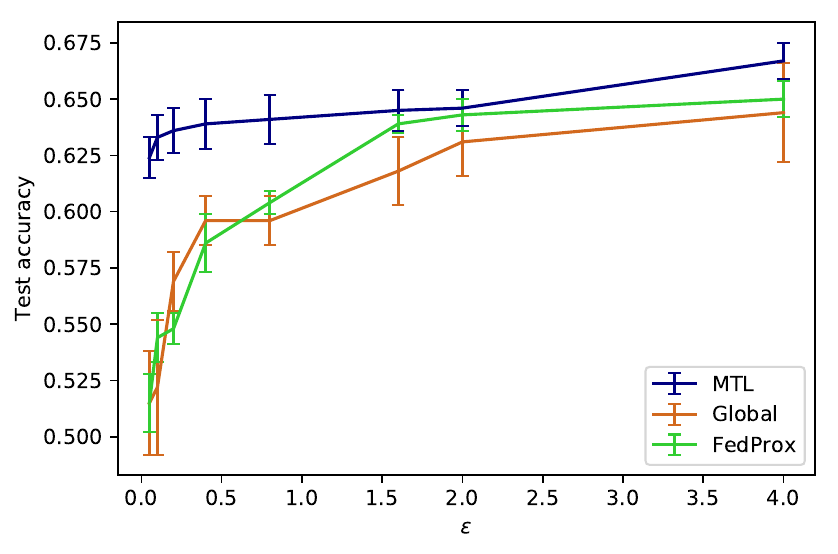}
        \subcaption{CelebA}
    \end{subfigure}
    \vspace{-0.09in}
    \caption{\small Comparison of PMTL and training a private global model(FedAvg/FedProx). } %
  \vspace{-0.1in}
    \label{fig:mtl_vs_fedprox}
\end{figure}

\subsection{Comparison with pure local baseline}
\label{appen:compare_local}

\setlength{\tabcolsep}{0pt}
\begin{table*}[h!]
    \centering
    \label{table:local_exp}
    \vspace{-0.15in}
    \scalebox{1.2}{
        \begin{tabular}{cccccccc}
        \toprule
        & Local & \multicolumn{3}{c}{PMTL} & \multicolumn{3}{c}{PMTL+best finetuning} \\
        \cmidrule(r){3-8}
             &  & $\epsilon=0.1$ & $\epsilon=0.8$ & $\epsilon=2.0$ & $\epsilon=0.1$ & $\epsilon=0.8$ & $\epsilon=2.0$ \\
             \hline
            StackOverflow & .318 &  .305 &  .323 &  .324 &  $\ast$ & $\ast$ & $\ast$ \\
            FEMNIST & .618 & .371 & .498 & .621 & .663 & .640 & .681 \\
            CelebA  & .694 & .633 &.641 & .667 & .801 & .817 & .818 \\
        \bottomrule
        \end{tabular}
    }
    \caption{\small Comparison between PMTL and Local training. For the PMTL+finetuning results on the non-convex problems, we pick the finetuning method that yields the highest test accuracy from all the methods introduced in Section \ref{sec:exp:finetuning}.}
    \vspace{-1.2em}
\end{table*}

While federated learning could yield better utility performance compared to pure local training, this is not always true when we apply client-level DP during federated learning. When a small $\epsilon$ is enforced, accuracy for federated learning could drastically drop (see Section \ref{sec:exps}). In this section, we compare our PMTL with training pure local model. In addition, since local finetuning does not incur additional privacy cost in our scenario to protect client-level privacy, we also compare PMTL+finetuning with local training. We present the results in Table 3. For StackOverflow where a convex model is used, finetuning with sufficiently many rounds should be the same as training the local model. For the other two datasets where a neural network is trained, training a purely local model performs worse than PMTL under large $\epsilon$ and PMTL with the best local finetuning objective under all $\epsilon$ we evaluated. We note that the goal of our work is not to argue that MTL is better than global/local baselines in all scenarios, but rather to show that it is possible to provide effective private training methods for commonly-used MTL objectives.

\subsection{Comparison to PP-SGD~\citep{pmlr-v162-bietti22a}}
\label{appen:ppsgd}
\begin{wrapfigure}{r}{0.45\textwidth}
    \centering
    \vspace{-.2in}
    \hspace{-.2in}
    \includegraphics[trim={.1cm 0 0 0},clip,width=0.4\textwidth]{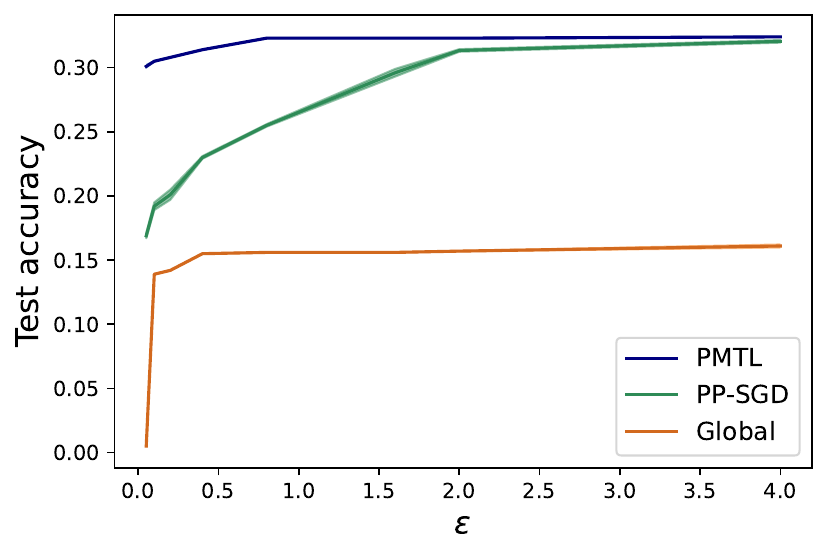}
    \vspace{-1em}
    \caption{\small Comparison with PP-SGD on Stackoverflow tag prediction}
    \label{fig:ppsgd}
\end{wrapfigure}
In this section, we compare PP-SGD~\citep{pmlr-v162-bietti22a}, a similar form of model personalization in federated learning, with our proposed PMTL on Stackoverflow tag prediction. PP-SGD aims to solve the following \textit{local} objective: $\min_{w,\theta_i}f_i(w,\theta_i,(x,y)):=\ell(y,(w+\theta_i)^\top x)$. It is worth noting that when the model is a neural network, there isn't a straightforward extension for this method to support personalized model weights for each layer, in contrast to our method where the mean-regularization term is calculated by taking the difference of the entire model parameter vector of global and local model. Therefore, for fair comparison, we run PP-SGD on the logstic regression Stackoverflow tag prediction task. Recall that different from the stackoverflow task in the original \cite{pmlr-v162-bietti22a} paper, we look at a slightly different setting where feature dimension is 10000 and number of classes is 500 (instead of 5000 features dimension and 80 classes in \cite{pmlr-v162-bietti22a}). The results are shown in Figure \ref{fig:ppsgd}. As we see, when we require strong privacy ($\epsilon<1$), PP-SGD gives a worse privacy-utility trade-off compared to our method. When privacy is weak, our method and PP-SGD achieves similar utility under same privacy budget.

\end{document}